\newtheorem{theorem}{Theorem}[section]
\newtheorem{lemma}[theorem]{Lemma}
\newtheorem{proposition}[theorem]{Proposition}
\newtheorem{assumption}[theorem]{Assumption}
\newcommand{\R}{\mathbb{R}}  
\newcommand{\E}{\mathbb{E}}
\newcommand{\cX}{\mathcal{X}}
\newcommand{\cR}{\mathcal{R}}
\newcommand{\cY}{\mathcal{Y}}
\newcommand{\cO}{\mathcal{O}}
\newcommand{\cF}{\mathcal{F}}
\newcommand{\cH}{\mathcal{H}}
\newcommand{\cP}{\mathcal{P}}
\newcommand{\Ber}{\mathrm{Ber}}
\newcommand{\cD}{\mathcal{D}}
\newcommand{\eps}{\epsilon}
\newcommand{\Ymin}{Y_{\mathrm{min}}}
\newcommand{\Ymax}{Y_{\mathrm{max}}}
\newcommand{\yt}{\tilde{y}}
\newcommand{\SM}{\mathbf{smCE}}
\newcommand{\norm}[1]{\left\|#1 \right\|_2}
\renewcommand{\eps}{\varepsilon}
\newcommand{\poly}{\mathrm{poly}}
\newcommand{\argmin}{\mathrm{argmin}}
\newcommand{\cFLip}{\cF_{\mathrm{Lip}}}
\def \endprf{\hfill {\vrule height6pt width6pt depth0pt}\medskip}
\renewenvironment{proof}{\noindent {\bf Proof} }{\endprf\par}
\newcommand{\algname}{Defensive Forecasting\xspace}
\newcommand{\ind}[1]{\mathbf{1}\left\{#1\right\}}
\title{In Defense of Defensive Forecasting}
\author{ Juan Carlos Perdomo\thanks{Departments of Computer Science \& Engineering and Data Science, New York University. Correspondence to \protect\url{j.perdomo.silva@nyu.edu}, \protect\url{brecht@berkeley.edu}}\; and Benjamin Recht\thanks{Department of Electrical Engineering and Computer Sciences, University of California, Berkeley.}}
\date{\today}
\begin{document}

\maketitle

\begin{abstract}
This tutorial provides a survey of algorithms for \algname, where predictions are derived not by prognostication but by correcting past mistakes. Pioneered by \citet{vovk2005defensive}, \algname frames the goal of prediction as a sequential game, and derives predictions to minimize metrics no matter what outcomes occur. We present an elementary introduction to this general theory and derive simple, near-optimal algorithms for online learning, calibration, prediction with expert advice, and online conformal prediction. 
\end{abstract}

{\bf Keywords.}  Defensive forecasting. Sequential prediction. Online learning. Calibration. Expert advice. Conformal prediction. 

\section{Introduction}\label{sec:intro}

From sports to politics, from the stock market to prediction markets, from cancer detection to sequence completion, prediction is a big business. But how can someone get in on the action? A forecaster is only as good as their record, so they must demonstrate that their predictions are prescient. To make good predictions, it seems like you need some level of clairvoyance to see what the future holds. But what if forecasters can cleverly cook their books to make their predictions look good? What if they could make predictions that correct the errors you made in the past? In this case, they wouldn’t need to know anything about the future. They'd just need to know how to do proper accounting. In this survey, we describe a simple, general strategy for such strategic accounting, Defensive Forecasting. 

Defensive Forecasting was first proposed by Vovk, Takemura, and Shafer as a game-theoretic strategy for prediction. \citet{vovk2005defensive} assume that nature, which is producing future outcomes, is adversarial. A forecaster’s job is then to make a prediction so that no matter what the adversarial nature does, the forecaster comes out ahead. The key is to utilize the sequential interaction with nature, so that forecasters sequentially update their predictions as nature reveals outcomes. This notion of predictions and decisions as game theoretic goes back to~\citet{wald1945statistical}. It was revisited in the 1990s in a line of influential work on calibration initiated by \citet{foster1998asymptotic} and extended in \citep{sandroni2003calibration, lehrer2001any, fudenberg1999easier}. In modern learning theory, it has been a principle for algorithm design, motivating applications of game-theoretic tools like Blackwell Approachability~\citep{abernethy2011blackwell,perchet2013approachability}, Approximate Dynamic Programming~\citep{rakhlin2012relax,RakhlinLectureNotes}, or Fixed Point Theory \citep{foster2021forecast}. However, Defensive Forecasting is considerably simpler than all of these approaches. It uses only a restricted set of moves specifically designed to correct past errors. The game of robustly optimizing against an adversarial nature collapses into simple bookkeeping. 

We write this tutorial with two goals in mind. The first is to provide an accessible introduction \algname, a collection of powerful and underappreciated techniques for sequential prediction. Rather than thinking of predictions as having anything to do with the future, these algorithms view forecasting as a means to correct past mistakes. We work through examples that demonstrate how \algname yields simple and powerful algorithms for a variety of interesting problems, including online learning, debiasing, calibration, prediction with expert advice, and conformal prediction. 

Having presented these technical ideas, the second goal of our tutorial is to answer the conceptual question we laid out at the beginning: what \emph{is} a prediction in the first place? What makes a prediction ``good?'' We aim to demonstrate that if you can clearly specify your objectives and describe your epistemic commitments regarding what is predictable, you can derive a defensive forecasting strategy that provably optimizes the proposed evaluation.

We proceed by first examining the simple case of predicting the probability of bits in a sequential fashion. This motivates a general algorithm for \algname that we discuss in \Cref{sec:meta_algorithm}. We present a special case of \algname in \Cref{sec:dmm} that yields algorithms for online decision making (\Cref{sec:risk_minimization}) and prediction with linear combinations of features (\Cref{sec:linear_bandits}). These sections illustrate that \algname has a close relationship to the notion of Outcome Indistinguishability \citep{dwork2021outcome}. In essence, the probabilistic predictions are chosen so that that the analyst could proceed assuming that the outcomes had been sampled with those probabilities.

We next follow~\citet{vovk2007k29} and generalize \algname to kernel spaces (\Cref{sec:kdmm}). This will yield simple methods for calibration. Indeed, we show how many different notions of calibration can be achieved by \algname in  \Cref{sec:calibration}. We also show how \algname algorithms can yield optimal results for prediction with expert advice (\Cref{sec:experts}) and for the problem of computing quantiles in an online fashion (\Cref{sec:quantiles}).

We strive to keep this technical overview as elementary as possible, seeking the simplest and most direct algorithms with the shortest analyses. As a result, we don't strive to make every analysis as tight as possible though we point out a few cases where the algorithms are optimal. We focus on providing intuitions and highlighting the main ideas. Most of the mathematics needed consists of algebraic manipulations and rudimentary probability. We have a brief digression into kernel methods, but anyone familiar with kernel methods in machine learning will find this section approachable. Throughout, we provide pointers to the relevant literature for readers interested in the latest, most sophisticated results. 

\section{Rudiments of \algname for Predicting Events}\label{sec:bit_prediction}

Suppose we want to predict the likelihood that a certain event will occur based on observations of past events. For example, if we are going to predict whether a basketball player will make their next free throw, we will use their past success rate as a guess for the probability of the next shot. We can abstract this as observing a sequence of $T$ bits (assigning a $1$ if they make the shot and a $0$ otherwise), and wanting to predict the likelihood that the next bit will be a $1$ (i.e., will they make the shot?). A reasonable heuristic guess for that likelihood is the average of the first $T$ bits. If there were far more ones than zeros, it is sensible to assume the next bit will also be more likely a one than not.

Note that in this elementary prediction exercise, there are two components. First, the inductive assumption that rates in the past are indicative of likelihoods in the future. Second, the observation that an elementary algorithm can compute the past rate. The inductive assumption and the algorithm are effectively independent! You could calculate the rate of the past bits, no matter how the next bit relates to the previous bits. Moreover, the extent to which the average of the first $T$ bits is a reliable signal of the next bit cannot be determined based off any statistics of the bits we have seen so far. The only separation between viewing this summary as meaningless historical bookkeeping versus an insightful claim about the future is a fundamental, inductive assumption that the processes generating our data remain stable over time.  Furthermore, this defining, inductive assumption that the past looks like the future is fundamentally unrelated to any formal guarantees we prove about the performance of online algorithms on the realized sequence. 

Let's now formalize the sequential bit prediction problem and derive algorithms with such formal guarantees. We'd like to predict a sequence of bits, $y_1, y_2, \ldots, y_T$. We are allowed to use the previous $t$ samples to predict $y_{t+1}$. What should we predict? Let $p_t$ denote the prediction of the $t$th bit. As we've already mentioned, what we predict is determined by how we will be scored. Thus, we need to first describe an evaluation and then see how to make predictions to ace the prescribed test.

As a simple example, let's say that we will evaluate the predictions according to the absolute error metric:
$$
    \left| \frac{1}{T} \sum_{t=1}^T p_t - \frac{1}{T} \sum_{t=1}^T y_t \right|\,. 
$$
In this metric, we can let $p_t$ be real valued and think of $p_t$ as the probability that $y_t=1$. In this framing, the expected number of times $y_t=1$ is exactly $\sum_{t=1}^T p_t$. The realized number of times $y_t=1$ is of course $\sum_{t=1}^T y_t$. If the online algorithm has low absolute error, we can say that its predictions forecast the true number of positive events on average.

To motivate the general strategy of \algname, suppose we could show that our prediction algorithm satisfies the inequality:
\begin{equation}\label{eq:diag-ub}
    \left(  \sum_{t=1}^T p_t - \sum_{t=1}^T y_t \right)^2 \leq  \sum_{t=1}^T (y_t-p_t)^2 \,. 
\end{equation}
Then, since $|y_t-p_t|\leq 1$, the inequality above inequality implies
$$
    \left| \frac{1}{T} \sum_{t=1}^T p_t - \frac{1}{T} \sum_{t=1}^T y_t \right| \leq \frac{1}{\sqrt{T}}\,. 
$$
In this case, for large $T$, the prediction algorithm would have low error. 

We can achieve \Cref{eq:diag-ub} inductively. Suppose the bound was satisfied for $t\leq T-1$. Define
$$
S_t = \sum_{t=1}^T (y_t-p_t).
$$
Then, for the final step $T$, we have,
$$
    S_T^2 = (S_{T-1} + y_T-p_T)^2 = S_{T-1}^2  + 2(y_T-p_T) S_{T-1} + (y_T-p_T)^2\,.
$$
If we choose $p_T$ so that the cross term $2(y_T-p_T) S_{T-1}$ is always non-positive, we will have achieved~\eqref{eq:diag-ub} by induction. But making this cross term nonpositive is straightfoward: if $S_{T-1}$ is negative, setting $p_T=0$ yields a nonpositive cross term no matter whether $y_T$ is 1 or 0. Similarly, if $S_{T-1}$ is nonnegative, $p_T=1$ yields a non-positive cross term no matter what $y_T$ ends up being. 

In sum, we get a simple algorithm for choosing the next prediction. 
We can initialize by predicting $p_1=0$. Then for each subsequent $t$, we can predict $p_t=1$ if $S_{t-1}\geq 0$ and $p_t=0$ otherwise.

Looking at what this algorithm actually does is instructive: Since $p_1=0$, at step 2 of the algorithm, $S_1=y_1$ and hence $p_2=y_1$. Similarly, at step 3, $S_2= (y_2-p_2)+(y_1-p1) = y_2$, and hence $p_3=y_2$. At each time step, we just predict the bit we saw in the previous time step. The algorithm guesses that the future will be the same as the present. Though this aggressive strategy seems to rest too heavily on the immediate past, is it not different in spirit from using a running average of the past few time steps to predict the next bit. However, note that the algorithm was not derived through some metaphysical equating of the past and the future. Instead, the evaluation metric implied a straightforward algorithmic solution of correcting the error accumulated by the last observation. Rather than making any assumptions about the future, low error can be achieved by correcting mistakes of the past.

Now, experts might note that the error of $1/\sqrt{T}$ is suboptimal for learning means. Perhaps this algorithm could be improved by being less aggressive. A more careful analysis shows this is not the case. We have
$$
    \left| \frac{1}{T} \sum_{t=1}^T p_t - \frac{1}{T} \sum_{t=1}^T y_t \right| =  \left| \frac{1}{T} (0 + \sum_{t=1}^{T-1} y_t) - \frac{1}{T} \sum_{t=1}^T y_t \right| =  \frac{y_T}{T} \leq \frac{1}{T}\,.
$$
This $\frac{1}{T}$ error rate is considerably better. Given that $y_T$ is revealed after $p_T$, an error of $\frac{1}{2T}$ is unavoidable for any algorithm. Hence, up at most a small constant, \algname is optimal for this error metric.

\section{\algname: A Meta Algorithm}
\label{sec:meta_algorithm}

Let's zoom out and discuss a broad, meta-strategy for \algname, generalizing the bit prediction example from the last section to encompass a family of powerful algorithms for various prediction problems.

At each time $t$ we observe a context vector $x_t \in \cX$, make a prediction $p_t \in \cP$, and then see the realized outcome $y_t\in \cY$. Rather than wanting $p_t$ to match $y_t$, we aim to make predictions $p_t$ that minimize,
$$
	\left\|\frac{1}{T}\sum_{t=1}^T F(x_t,p_t,y_t)\right\|_{2},
$$
where $F$ is some specified vector-valued function. Note that as long as $p_t$ is chosen so that
\begin{equation}\label{eq:foundational}
\sup_{y\in\mathcal{Y}} \left\langle F(x_t,p_t,y),  \sum_{s=1}^{t-1} F(x_s,p_s,y_s)\right\rangle \leq 0
\end{equation}
we have
\begin{equation}\label{eq:diagonal-bound}
\norm{\sum_{t=1}^T F(x_t,p_t,y_t)}^2 \leq \sum_{t=1}^{T} \norm{F(x_t,p_t,y_t)}^2 \,,
\end{equation}
and hence, if $\norm{F(x,p,y)}\leq M$ for all triples $(x,p,y) \in \cX \times \cP \times \cY$,
$$
\norm{\frac{1}{T}\sum_{t=1}^T F(x_t,p_t,y_t)} \leq \sqrt{\frac{\sum_{t=1}^T \norm{F(x_t,p_t,y_t)}^2}{T^2}} \leq \frac{M}{\sqrt{T}}\,.
$$
To see why \eqref{eq:diagonal-bound} holds, we can apply induction:
\begin{align*}
\norm{\sum_{t=1}^T F(x_t,p_t,y_t)}^2 &= \norm{\sum_{t=1}^{T-1} F(x_t,p_t,y_t)}^2
+ 2\left\langle F(x_t,p_t,y_t),  \sum_{s=1}^{T-1} F(x_s,p_s,y_s)\right\rangle + \norm{F(x_t,p_t,y_t)}^2\\
&\leq \sum_{t=1}^{T-1} \norm{F(x_t,p_t,y_t)}^2+ \norm{F(x_t,p_t,y_t)}^2\,.
\end{align*}
The final inequality follows by the inductive hypothesis and what we will refer to as the fundamental condition of \algname, \Cref{eq:foundational}. 
% Furthermore, note that this entire derivation also holds if instead of predicting a single $p_t$, we picked a distribution $\cD_t$ over $p \in \cP$
% \begin{equation}\label{eq:foundational_dist}
% \sup_{y\in\mathcal{Y}} \left\langle \sum_{p \in \cP} \cD_t(p)F(x_t,p_t,y),  \sum_{s=1}^{t-1} F(x_s,p_s,y_s)\right\rangle \leq 0. 
% \end{equation}
% In this case, we would instead arrive at the following, in expectation, guarantee:
% $$
% \norm{\frac{1}{T}\sum_{t=1}^T \E_{p_t \sim \cD_t}F(x_t,p_t,y_t)} \leq \frac{M}{\sqrt{T}}\,.
% $$
We summarize this meta-algorithm in Algorithm~\ref{alg:def-book}. 

\begin{algorithm}[t!]
\caption{\algname}\label{alg:def-book}
\begin{algorithmic}[1]
\For{$i=1,\ldots, T$}
\State Receive context $x_t$.
\If{$t=1$}
\State Predict $p_1$ using initialization rule.
\Else
\State Predict $p_t$ such that $\sup_{y\in\mathcal{Y}} \left\langle F(x_t,p_t,y),  \sum_{s=1}^{t-1} F(x_s,p_s,y_s)\right\rangle \leq 0$. \label{book-game} 
\EndIf
\State Receive outcome $y_t$
\EndFor
\end{algorithmic}
\end{algorithm}

The key question is thus when does $p_t$ satisfying the fundamental \algname condition (aka Line~\ref{book-game} in Algorithm~\ref{alg:def-book}) exist? In the simplest form, we need to know that for every $x \in \cX$ and $z \in \R^d$, there exists a $p\in \cP$ such that for all $y \in \cY$,
\begin{equation}\label{eq:variational}
\left\langle F(x_t,p_t,y), z \right\rangle\leq 0.
\end{equation}
There are a variety of ways to solve such nonlinear feasibility problems. When $F$ is linear in $p$, these problems can be solved by Blackwell Approachability \citep{Blackwell56}. For example, \citet{foster1999proof} used Blackwell Approachability to solve a particular flavor of calibration problem related to those we discuss in \Cref{sec:calibration}. \citet{abernethy2011blackwell} has shown that online linear regret maximization is equivalent to Blackwell Approachability. \citet{RakhlinLectureNotes} show how to apply Blackwell Approachability to solve more challenging bit prediction problems. Chapter 7 of \citet{cesaBianchi2006prediction} and the survey by \citet{perchet2013approachability} also highlight several other applications of Blackwell Approachability to online learning.

Other tools from variational analysis are also likely applicable to solving problems of the form \Cref{eq:variational}. For example, in its most general form, this expression is a variational inequality~\citep{hartman1966some}, and techniques from this branch of mathematical optimization may be applicable. Recent work on calibration by \citet{foster2021forecast} uses an analysis in terms of outgoing fixed points \citep{border1985fixed} to solve a similar feasibility problem.

While all of these powerful mathematical tools that make such inequalities solvable, in this tutorial, we focus on cases of functions $F$ where we can always satisfy basic inequalities like~\Cref{eq:foundational}. These cases are simple enough to yield elementary proofs without any appeals to convex analysis or topology.  

In fact, all of the algorithms we derive here have the same form. We first find an efficiently computable function $S_t: [0,1]\rightarrow \R $ that summarizes the mistakes of the past. If $S_t(1)$ is nonnegative, we predict $1$. If not, we check $S_t(0)$. If it is non-positive, we predict $0$ If neither condition holds, then we are guaranteed that $S_t(p)$ has a root in $[0,1]$. We find this root by binary search, and this root then serves as our prediction. We call this subroutine \emph{anticorrelation search}, and summarize it in \Cref{alg:anti-search}.

Somewhat surprisingly, algorithms based on this form of anticorrelation search also suffices to let us recover near-optimal results from online learning, conformal prediction, and calibration with elementary arguments. 

%More sophisticated mathematics could be employed to solve \algname problems, but our goal in this manuscript is to show how many existing applications can already be derived by elementary algebra and bookkeeping.

\begin{algorithm}[t!]
\caption{Anticorrelation Search}\label{alg:anti-search}
\begin{algorithmic}[1]
\State Given summary function $S:[0,1]\rightarrow \R$.
\If{$S(1) \geq  0$}
\State Return $p = 1$.
\ElsIf{$S(0) \leq 0$}
\State Return $p=0$.
\Else
\State Run binary search on $S(\cdot)$ to find $p$ satisfying $S(p)=0$.
\State Return $p$.
\EndIf
\end{algorithmic}
\end{algorithm}

\section{Defensive Moment Matching}\label{sec:dmm}
Let's first consider when the function $F$ takes the form
\begin{equation}\label{eq:moment-vector}
    F(x,y,p) = (y-p) \Phi(x,p)
\end{equation}
where $\Phi$ is a vector-valued function that is continuous in $p$ for $p \in \cP = [0,1]$. The entries in $\Phi(x,p)$ represent different features of the pair $(x,p)$.

We illustrate how predictions yielding low norm $F$ satisfy an intriguing notion of predictive validity: The predictions act as if they were ``true probabilities'' that generated the outcomes $y_t$.

We record the following lemma, which will be valuable throughout.

\begin{lemma}\label{lemma:oi}
Let $F(x,p,y) = (y-p)\Phi(x,p)$ and suppose that for some constant $C$,
\begin{align}
\label{eq:regression_F}
   \norm{ \frac{1}{T} \sum_{i=t}^T (y_t-p_t) \Phi(x_t,p_t)} \leq \frac{C}{\sqrt{T}}\,.
\end{align}
Then, for any function $f(x,p,y)$ such that, $f(x,p,1) - f(x,p,0) = \langle v, \Phi(x,p) \rangle$ where $v$ is an arbitrary fixed vector, we have
\begin{align}
\label{eq:online_oi}
   \left|\frac{1}{T} \sum_{i=t}^T f(x_t,p_t,y_t) -  \frac{1}{T} \sum_{i=t}^T \E_{\yt_t\sim \Ber(p_t)}[f(x_t,p_t,\yt_t)]  \right| \leq \frac{C \norm{v}}{\sqrt{T}}\,.
\end{align}
\end{lemma}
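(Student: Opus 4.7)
The plan is to reduce the left-hand side pointwise (in $t$) to a linear functional of $(y_t-p_t)\Phi(x_t,p_t)$, and then sum and apply Cauchy--Schwarz together with the hypothesized bound on the average of these vectors.

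Concretely, I would first exploit the fact that $y_t \in \{0,1\}$ (the setting of this section, where outcomes are bits) to write, for each $t$,
\[
f(x_t,p_t,y_t) = y_t\, f(x_t,p_t,1) + (1-y_t)\, f(x_t,p_t,0),
\]
and, taking the Bernoulli expectation,
\[
\E_{\yt_t\sim\Ber(p_t)}[f(x_t,p_t,\yt_t)] = p_t\, f(x_t,p_t,1) + (1-p_t)\, f(x_t,p_t,0).
\]
Subtracting the two and factoring gives
\[
f(x_t,p_t,y_t) - \E_{\yt_t\sim\Ber(p_t)}[f(x_t,p_t,\yt_t)] = (y_t - p_t)\bigl(f(x_t,p_t,1) - f(x_t,p_t,0)\bigr),
\]
which by the lemma's hypothesis on $f$ equals $(y_t-p_t)\langle v, \Phi(x_t,p_t)\rangle$.

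Next I would sum over $t$, pull $v$ outside the inner product by linearity, and apply Cauchy--Schwarz:
\[
\left|\frac{1}{T}\sum_{t=1}^T (y_t-p_t)\langle v, \Phi(x_t,p_t)\rangle\right|
= \left|\left\langle v,\, \frac{1}{T}\sum_{t=1}^T (y_t-p_t)\Phi(x_t,p_t)\right\rangle\right|
\leq \norm{v}\cdot \norm{\frac{1}{T}\sum_{t=1}^T (y_t-p_t)\Phi(x_t,p_t)}.
\]
The assumed bound \eqref{eq:regression_F} bounds the second factor by $C/\sqrt{T}$, yielding the claimed inequality \eqref{eq:online_oi}.

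There is really no hard step here: the proof is a one-line identity (using that $y_t$ is binary so $f(x_t,p_t,y_t)$ is an affine function of $y_t$, matching the affine expectation) plus one application of Cauchy--Schwarz. The only small subtlety worth flagging is that the argument relies on $y_t \in \{0,1\}$ so that the affine interpolation exactly reproduces $f$ at the realized outcome; this is implicit in the setting but deserves a one-line remark. Nothing else in the statement requires additional machinery.
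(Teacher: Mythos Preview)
Your proposal is correct and essentially identical to the paper's own proof: both write $f(x_t,p_t,y_t)$ as an affine function of the bit $y_t$, subtract the Bernoulli expectation to obtain $(y_t-p_t)\langle v,\Phi(x_t,p_t)\rangle$, sum, and apply Cauchy--Schwarz together with the hypothesis. Your explicit remark that the affine interpolation relies on $y_t\in\{0,1\}$ matches the paper's treatment exactly.
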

Lemma~\ref{lemma:oi} asserts that if Defensive Forecasting makes \Cref{eq:regression_F} small, then we can effectively treat a large set of functions of $f(x,p,y)$ evaluated at the true outcomes as if the $y_t$ were sampled from a Bernoulli distribution with mean $p_t$. That is, for all intents and purposes, we can retrospectively pretend the $y_t$ are biased coin flips with the biases $p_t$ we wrote down as our predictions. Following \cite{dwork2021outcome} we will refer to predictions satisfying \Cref{eq:online_oi} as being (online)  outcome indistinguishable with respect to the set of functions $f$. Later on in sections \Cref{sec:risk_minimization} and \Cref{sec:calibration} we will describe various interesting classes of functions $f$ that can be written this way, $f(x,p,1) - f(x,p,0) = \langle v, \Phi(x,p) \rangle$.

% For example, if we let $\Phi(x,p)=x$ for $x\in \R^d$, setting $v$ to be the $j$th standard basis vector, we get that the observed correlation between any feature $x^{(j)}$ and $y$, $f(x,p,y) = x^{(j)}y$ is equal to its expected value $\E_{\yt\sim p}[x^{(j)}\yt] = x^{(i)}p$.

As was also emphasized by Vovk and Shafer, the probabilities here are for bookkeeping predictions. \algname does not care whether or not the $y_t$ are random. In our online setup, they can be chosen completely arbitrarily, even with knowledge of the forecast $p_t$.  There may not be any patterns relating future and past outcomes whatsoever. Yet, \algname looks at the past outcomes to construct a prediction where we can pretend that the next bit was sampled from our prediction, no matter what the actual revealed outcome is. 

%While this is quite counterintuitive
%
%This claim is quite counterintuitive and seems flat-out false. How can we generate forecasts that look like they truly generated the data, if there is no ``true'' underlying data-generating process in the first place? Defensive forecasting provides an almost trivial algorithm that does exactly that.

Despite its counterintuitive relationship to probability, Lemma~\ref{lemma:oi} has a simple proof. For any triplet $(x,p, y)$ where $y \in \{0,1\}$, we can write $f$ as a linear function of $y$,
\begin{align*}
    f(x,p,y) = y f(x,p, 1) + (1-y) f(x,p, 0)  = y[f(x,p, 1) - f(x,p, 0)] + f(x,p,0). 
\end{align*}
A similar rewriting holds in expectation when $y$ is sampled from a Bernoulli distribution,
    \begin{align*}
        \E_{y \sim \Ber(p)} f(x,p, y) &= p f(x,p, 1) + (1-p) f(x,p,0)  = p[f(x,p, 1) - f(x,p, 0)] + f(x,p, 0).
    \end{align*}
Taking their difference, the $f(x,p,0)$ term cancels out and we get that,
\begin{align*}
    \frac{1}{T}\sum_{i=t}^T f(x_t,p_t,y_t) -   \frac{1}{T}\sum_{i=t}^T \E_{\yt_t\sim \Ber(p_t)}[f(x_t,p_t,\yt_t)] &= \frac{1}{T}\sum_{t=1}^T (f(x_t,p_t,1) -f(x_t,p_t,1))(y_t-p_t) \\ 
    & = \frac{1}{T}\sum_{t=1}^T \langle v, \Phi(x_t,p_t) \rangle  (y_t-p_t),
\end{align*}
where we used the assumption $f(x,p,1) - f(x,p,0) = \langle v, \Phi(x,p) \rangle$. We can bound this last term in terms of the norm of $v$ and a term made small by \algname.
\begin{align*}
\frac{1}{T}\sum_{t=1}^T \langle v, \Phi(x_t,p_t) \rangle  (y_t-p_t) =  \langle v, \frac{1}{T}\sum_{t=1}^T \Phi(x_t,p_t)(y_t-p_t) \rangle  \leq \norm{v} \norm{\frac{1}{T}\sum_{t=1}^T \Phi(x_t,p_t)(y_t-p_t) }.
\end{align*}
This proves the Lemma.

Let's now derive a variant of \algname that guarantees the bound in  \Cref{eq:regression_F}. Define,
$$
    S_t(p) = \sum_{s=1}^{t-1} \langle \Phi(x_t,p), \Phi(x_s,p_s) \rangle (y_s-p_s)\,.
$$
From \Cref{eq:foundational}, \algname chooses $p_t$ such that, 
\begin{align}
\label{eq:fundamental_ineq_regr}
\sup_{y\in\mathcal{Y}} \left\langle F(x_t,p_t,y),  \sum_{s=1}^{t-1} F(x_s,p_s,y_s)\right\rangle = \sup_{y\in \{0,1\}} (y-p_t) \cdot S_t(p_t) \leq 0.
\end{align}
This guarantee is only modestly harder to achieve than it was for bit prediction. If $S_t(1)\geq 0$, then we must have that $(y-1)S_t \leq 0$ for all $y$. Therefore, choosing $p_t=1$ would satisfy this condition. If this isn't the case, we can check if $S_t(0)\leq 0$. In this case, we'd have $(y-0)S_t(0) \leq 0$ for all $y$, and choosing $p_t=0$ would suffice. If neither of these conditions holds, continuity of $S_t$ implies there exists a $p \in (0,1)$ with $S_t(p)=0$. This $p$ would then imply the above inequality.  In sum, running anticorrelation search (\Cref{alg:anti-search}) on the function $S_t(p)$, yields a prediction satisfying \Cref{eq:fundamental_ineq_regr}.

This procedure, originally introduced by \citet{vovk2005defensive}, is summarized in Algorithm~\ref{alg:dmm}. Since the prediction $p_t$ satisfies \Cref{eq:fundamental_ineq_regr} and since $|y_t-p_t| \leq 1$, the analysis from \Cref{sec:meta_algorithm} shows 
\begin{align*}
    \norm{\frac{1}{T} \sum_{t=1}^T \Phi(x_t,p_t)(y_t-p_t)} \leq \frac{M}{\sqrt{T}}.
\end{align*}
where $M=\sup_{x,p} \norm{\Phi(x,p)}$.

Before running through several applications of this form of \algname, we quickly highlight that we can implement it efficiently. Though we presented \Cref{alg:anti-search} with exact root finding,  one can use approximate root finding and still yield an $O(1/\sqrt{T})$ guarantee. Indeed, as long as $|S_t(p_t)| \leq \eps$ for $\eps \leq 1 / \poly(t)$, a $O(1/\sqrt{T})$ bound will hold. We refer the reader to \cite{kernelOI} for details. We can find $\epsilon$-approximate roots by binary search with at most $\lceil \log(1/\eps) \rceil$ many evaluations of $S_t(p)$. Furthermore, if computing $\Phi(x,p)$ takes time $\cO(d)$ for $\Phi(x,p)\in \R^d$, then by maintaining the running sum, $\sum_{s=1}^{t-1} \Phi(x_s,p_s)(y_s-p_s)$, we can compute $S_t$ in time $\cO(d)$ at each time step $t$. Therefore, each $p_t$ can be computed in time $\widetilde{\cO}(d)$.

\begin{algorithm}[t]
\caption{\algname for Matching Empirical Moments}\label{alg:dmm}
\begin{algorithmic}[1]
\State Define $S_t(p) = \sum_{s=1}^{t-1} \langle \Phi(x_t,p), \Phi(x_s,p_s) \rangle(y_s -p_s)$.
\State Run anticorrelation search (\Cref{alg:anti-search}) on $S_t$ to find $p_t$.
\end{algorithmic}
\end{algorithm}

\section{Risk Minimization}\label{sec:risk_minimization}

As a first application, let's describe a simple problem that seems like it should be impervious to a \algname strategy. Suppose we are utility maximizers and want to minimize loss by choosing actions over time. Our goal is to accrue low regret,
\begin{align}
\label{eq:constant_action_regret}
    	\sum_{t=1}^T \ell(a_t, y_t) - \min_{a_\star}\sum_{t=1}^T \ell(a_\star, y_t) =  o(T),
\end{align}
where $a_t$ is the action chosen at time $t$ and $a_\star$ is the best constant action possible having known the sequence of $y_t$ in advance. This regret guarantee implies that the difference in average loss incurred between our actions $a_t$ and the best fixed action in hindsight goes to zero over time, 
$$
	\lim_{T \rightarrow \infty}\left|\frac{1}{T}\sum_{t=1}^T \ell(a_t, y_t) - \frac{1}{T} \sum_{t=1}^T \ell(a_\star, y_t) \right|= 0
$$
How could we achieve this? Consider the following thought experiment. If $y_t$ was truly random and sampled from a Bernoulli distribution with parameter $p_t$, then the optimal action $\pi(p_t)$ would be the one that minimizes the conditional expectation,
\begin{align}
\label{eq:pi_def}
	\pi(p_t) := \arg \min_a \E_{y_t\sim \operatorname{Ber}(p_t)} [\ell(a,y_t)].
\end{align}
The function $\pi(\cdot)$ is often very simple. For instance if $\ell(a,t)$ is the squared loss $(y-a)^2$, then $\pi(p_t) = p_t$. And if $\ell(a,t)$ is the 01 loss $1\{a \neq y\}$, then $\pi(p_t) = 1\{p_t \geq 1/2\}$. Other examples are similarly easy to calculate.

Now, if we knew $p_t$ and played actions $a_t = \pi(p_t)$, then  $\E_{y_t \sim p_t} \ell(\pi(p_t), y_t) \leq \E_{y_t \sim p_t} \ell(a_\star , y_t)$ at every time step $t$. Hence, this strategy would yield a related version of \Cref{eq:constant_action_regret} in an idealized world where we knew the data generating process.

As we described above, \algname lets us act as if the $y_t$ were such ideal random samples. Thus, we can generate predictions $p_t$ where we can effectively assume that $y_t$ was sampled from $p_t$ and choose actions $a_t = \pi(p_t)$ that yield low regret. To see why this suffices, assume that we generate predictions satisfying the following indistinguishability guarantees,
\begin{align}
\label{eq:dec_oi}
\left| \sum_{t=1}^T \ell(\pi(p_t), y_t) - \sum_{t=1}^T \E_{\yt_t \sim p_t}[\ell(\pi(p_t), \yt_t)] \right| &\leq \cR_1(T) \\ 
\sup_{a}\left| \sum_{t=1}^T \ell(a, y_t) -  \sum_{t=1}^T \E_{\yt_t \sim p_t}[\ell(a, \yt_t)] \right| &\leq  \cR_2(T),
\label{eq:loss_oi}
\end{align}
where $\cR_1(T)$ and $\cR_2(T)$ are both $o(T)$. Then, by the first inequality in \Cref{eq:dec_oi}, 
\begin{align*}
     \sum_{t=1}^T \ell(\pi(p_t), y_t) \leq \sum_{t=1}^T \E_{\yt_t \sim p_t}[\ell(\pi(p_t), \yt_t)] + \cR_1(T)\,.
\end{align*}
Furthermore, by definition of $\pi$, we also know that for any $a$ and time step $t$,
\begin{align*}
    \E_{\yt \sim p_t}[\ell(\pi(p_t), \yt_t)] \leq \E_{\yt \sim p_t}[\ell(a, \yt_t)].
\end{align*}
Lastly, the second indistinguishability guarantee in \Cref{eq:loss_oi}, yields, 
\begin{align*}
\sum_{t=1}^T \E_{\yt \sim p_t}[\ell(a, \yt_t)] \leq \sum_{t=1}^T \ell(a, y_t) + \cR_2(T).
\end{align*}
Putting these three equations together, we get the desired regret guarantee \Cref{eq:constant_action_regret},
\begin{align*}
    \sum_{t=1}^T \ell(a_t,y_t) \leq \min_{a_\star} \sum_{t=1}^T \ell(a_\star, y_t) + \cR_1(T) +\cR_2(T)\leq  \min_{a_\star} \sum_{t=1}^T \ell(a_\star, y_t) + o(T)\,.
\end{align*}
With this analysis in mind, the only thing that is left to find a way of getting the desired indistinguishability guarantees. These we can achieve using \algname and \Cref{lemma:oi}.

Recall that the goal is to be indistinguishable with respect to the functions $\ell(\pi(p_t),y_t)$ and $\ell(a, y_t)$ from \Cref{eq:loss_oi,eq:dec_oi}. Let $B=\sup_a |\ell(a,1) - \ell(a,0)|$ and define, 
\begin{align*}
    \Phi(x,p) =   \begin{bmatrix} \ell(\pi(p), 1)-\ell(\pi(p), 0) & B\end{bmatrix}^\top
\end{align*}
With this choice of $\Phi$, the discrete derivatives of our functions can we written as $\langle v,  \Phi(x,p)\rangle$ for fixed vectors $v$ (the $v$ do not depend on $x$ or $p$),
\begin{align*}    
    \ell(\pi(p_t),1) - \ell(\pi(p_t),0) &= \left\langle \begin{bmatrix}
        1 \\ 0
    \end{bmatrix}, \Phi(x,p)\right\rangle, \\
    \ell(a,1) - \ell(a,0) &= \left\langle \begin{bmatrix}
        0\\ B^{-1}(\ell(a,1) - \ell(a,0))
    \end{bmatrix}, \Phi(x,p)\right\rangle.
\end{align*}
Since $\sup_{x,p}\norm{\Phi(x,p)}^2 \leq 2B^2$ and the vectors $v$ in the equations above have norm at most 1, \Cref{lemma:oi} implies that \algname produces predictions satisfying, 
\begin{align*}
\left| \sum_{t=1}^T \ell(\pi(p_t), y_t) - \sum_{t=1}^T \E_{\yt \sim p_t}\ell(\pi(p_t), y_t) \right| &\leq \sqrt{2TB^2}\\ 
\sup_{a}\left| \sum_{t=1}^T \ell(a, y_t) -  \sum_{t=1}^T \E_{\yt \sim p_t}\ell(a, y_t) \right| &\leq  \sqrt{2TB^2}\,.
\end{align*}
Therefore, we get that 
$$
\sum_{t=1}^T \ell(a_t, y_t) - \min_{a^*}\sum_{t=1}^T \ell(a_\star, y_t) \leq  2B\sqrt{2T}
$$
Note that this last result is a purely deterministic statement. It holds with probability 1 over the realized sequence of $y_t$. Furthermore, we made no assumptions (e.g. convexity) on the loss $\ell$ other than the fact that it is bounded and that $\ell(\pi(p_t),1) - \ell(\pi(p_t),0)$ is continuous in $p$. Furthermore, for simplicity, we considered the case where there are no context vectors $x_t$. However, the same ideas generalize to that setting as we will see in a moment.

The presentation in this section follows the analysis first developed for the offline setting in \citet{omnipredictors,gopalan2022loss} and extended to the online context in \citet{garg2024oracle}, \citet{okoroafor2025near}, \citet{noarov2025high}, and \citet{kernelOI}. We note that much earlier work by \cite{foster2006calibration} derived a similar relationship between having a small norm for \Cref{eq:moment-vector} and low regret in the square-loss.

% The beauty of the \algname and OI techniques is that, while the realized outcomes are completely arbitrary, by looking into the past, they can ``cook the books'' and let us  pretend that $y$ was effectively sampled from the distribution $\Ber(p)$ on average. By ensuring this notion of indistinguishability, we were able to argue that we were choosing near optimal actions. This unexpected connection gives us a deterministic algorithm for online risk minimization that assumes nothing about revealed sequence of outcomes $y_t$.

\section{Linear Classes and Online Learning}\label{sec:linear_bandits}

Making predictions on par with a constant action is one thing, but what if you want to outperform more sophisticated prediction functions? For example, we might want to choose actions that perform as well as those computed as functions of a provided context vector $x_t$. In equations, we'd like to make predictions such that
$$
	\sum_{t=1}^T \ell(a_t, y_t) \leq \min_{h \in \cH} \sum_{t=1}^T \ell(h(x_t), y_t) + o(T)\,.
$$
Here, $a_t$ is the action chosen at time $t$ and $\cH$ is a class of functions mapping features to actions. 
Richer classes $\cH$ lead to stronger guarantees. For instance, if $\cH$ is the class of all linear functions, $h(x) = \langle w, x\rangle + a$, the best function in $\cH$ is at least as good as the best fixed action that we considered in the previous section. It can perhaps be considerably better if the optimal action is easily predictable from the provided context $x_t$. 

A simple modification of the previous \algname algorithm enables us to achieve low regret in this more challenging setting. We summarize this result in the following lemma.
\begin{lemma}
\label{lemma:general_loss_min}
Assume that, 
\begin{align}
\label{eq:dec_oi_h}
\left| \sum_{t=1}^T \ell(\pi(p_t), y_t) - \sum_{t=1}^T \E_{\yt \sim p_t}[\ell(\pi(p_t), \yt_t)] \right| &\leq \cR_1(T) \\ 
\sup_{h \in \cH}\left| \sum_{t=1}^T \ell(h(x_t), y_t) -  \sum_{t=1}^T \E_{\yt \sim p_t}[\ell(h(x_t), \yt_t)] \right| &\leq  \cR_2(T)\,.
\label{eq:loss_oi_h}
\end{align}
Then, 
\begin{align*}
    \sum_{t=1}^T \ell(\pi(p_t), y_t) \leq \min_{h \in \cH} \sum_{t=1}^T \ell(h(x_t), y_t) + \cR_1(T) + \cR_2(T)
\end{align*}
\end{lemma}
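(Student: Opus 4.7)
My plan is to follow the same three-step chain used in \Cref{sec:risk_minimization} for the constant-action regret bound, simply substituting the time-varying competitor $h(x_t)$ for the fixed action $a_\star$. The two indistinguishability hypotheses \eqref{eq:dec_oi_h} and \eqref{eq:loss_oi_h} will supply the outer transitions of the chain, while the middle transition is the pointwise minimality of $\pi(p_t)$ guaranteed by its definition in \eqref{eq:pi_def}.

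First, I would use \eqref{eq:dec_oi_h} to pass from the realized losses $\ell(\pi(p_t), y_t)$ to the Bernoulli counterfactual expectations $\E_{\yt \sim p_t}[\ell(\pi(p_t), \yt_t)]$, at a cost of $\cR_1(T)$. Next, fixing an arbitrary $h \in \cH$, I would invoke \eqref{eq:pi_def} at each time $t$: since $\pi(p_t)$ minimizes $a \mapsto \E_{\yt \sim \Ber(p_t)}[\ell(a, \yt)]$ over \emph{all} actions and $h(x_t)$ is simply a particular action, the pointwise inequality
\[
\E_{\yt \sim p_t}[\ell(\pi(p_t), \yt_t)] \;\leq\; \E_{\yt \sim p_t}[\ell(h(x_t), \yt_t)]
\]
holds at every $t$, and summing extends the chain without any additional slack. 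Finally, \eqref{eq:loss_oi_h} translates the right-hand side back to realized losses of $h$, costing another $\cR_2(T)$. Since $h \in \cH$ was arbitrary, taking the infimum over $\cH$ produces the stated bound.

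There is no substantive obstacle; the argument is essentially a copy-paste of the constant-action case. The one point worth flagging is that step three requires \eqref{eq:loss_oi_h} to hold \emph{uniformly} over $\cH$, so that the bound applies to whichever $h$ ends up achieving the minimum---which is exactly what the supremum in the hypothesis provides. All of the genuine work has been deferred into the two indistinguishability hypotheses themselves, which would in turn be produced by \Cref{lemma:oi} with a feature map $\Phi$ rich enough to represent both $\ell(\pi(p),1) - \ell(\pi(p),0)$ and $\ell(h(x),1) - \ell(h(x),0)$ for every $h \in \cH$.
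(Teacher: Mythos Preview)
Your proposal is correct and matches the paper's own proof essentially step for step: apply \eqref{eq:dec_oi_h}, then the pointwise optimality of $\pi(p_t)$ from \eqref{eq:pi_def}, then \eqref{eq:loss_oi_h}, and finally take the infimum over $h\in\cH$. Your remark about needing the supremum in \eqref{eq:loss_oi_h} so the bound applies to the minimizing $h$ is exactly the point the paper makes when it says ``Since these inequalities hold for any $h$, they also hold for the best function in $\cH$.''
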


The lemma above generalizes the argument we saw in the last section to work for any loss function and class $\cH$. In particular, note that the conditions of the lemma are direct generalizations of  \Cref{eq:dec_oi,eq:loss_oi} that we saw led to low excess risk with respect to the best, fixed action in hindsight $a_\star$. In particular, \Cref{eq:dec_oi_h} is identical to \Cref{eq:dec_oi}, and \Cref{eq:loss_oi_h} is the direct analogue of \Cref{eq:loss_oi} where we've swapped out $\ell(a,y_t)$ for $\ell(h(x_t),y_t)$.
Here, $\pi(p_t)$ is defined the same way as before in \Cref{eq:pi_def}. For any $h$, it satisfies:
$$
\E_{\yt \sim p_t}[\ell(\pi(p_t), \yt_t)] =  \min_a \E_{\yt \sim p_t}[\ell(a, \yt_t)] \leq \E_{\yt \sim p_t}[\ell(h(p_t), \yt_t)])\,.
$$
The proof is also identical to the argument we saw before. For any function $h \in \cH$,
\begin{align*}
\sum_{t=1}^T \ell(\pi(p_t), y_t) &\leq \sum_{t=1}^T \E_{\yt \sim p_t}[\ell(\pi(p_t), \yt_t)] + \cR_1(T)  &\text{ (By \Cref{eq:dec_oi_h})}\\
& \leq \sum_{t=1}^T \E_{\yt \sim p_t}[\ell(h(p_t), \yt_t)] + \cR_1(T)  &\text{(By definition of $\pi(p_t)$)}  \\ 
& \leq \sum_{t=1}^T \ell(h(p_t), \yt_t) + \cR_1(T)+\cR_2(T)\,. &\text{(By \Cref{eq:loss_oi_h})}
\end{align*}
Since these inequalities hold for any $h$, they also hold for the best function in $\cH$, proving the lemma.

Now, let's see how we can operationalize these ideas via \algname. To simplify notation and the algorithm, let's focus our attention to the special case where $\ell$ is the squared loss, $\ell(p,y) = (y-p)^2$ and our action is simply predicting a $p \in [0,1]$. Let us assume that features $x$ are vectors in $\R^d$ with norm at most $B$, $\norm{x} \leq B$, and that we are comparing against the class of linear predictors with norm at most $M$, $\cH = \{\langle x, w \rangle: \norm{w} \leq M\} $. 

With these choices, the regret minimization problem is equivalent to minimizing the Brier score of predictions with respect to the best low-norm linear prediction computable from the full sequence,
\begin{align*}
  \lim_{T\rightarrow 0} \left| \frac{1}{T} \sum_{t=1}^T (p_t - y_t)^2 - \min_{w: \|w\| \leq M} \frac{1}{T}\sum_{t=1}^T(y_t- \langle x_t, w\rangle)^2 \right| =0\, . 
\end{align*}
Note that in this case $\ell(p,1) - \ell(p,0) = 1-2p$ for any $p$, and 
$$
\pi(p_t) = \argmin_{p} \E_{\yt \sim p_t}(\yt_t - p)^2 = p_t\,.
$$
Suppose, we run \algname (\Cref{alg:dmm}) with the feature mapping
\begin{align}
\label{eq:linear_regression_feature_map}
    \Phi(x,p) = \begin{bmatrix} 1 & p & x  \end{bmatrix}^\top\,. 
\end{align}
The discrete derivative of these functions can again be written as $\langle v, \Phi(x,p) \rangle$ for fixed vectors $v$,
\begin{align*}    
    \ell(\pi(p_t),1) - \ell(\pi(p_t),0) &= 1-2p_t  =\left\langle \begin{bmatrix}
        1 \\ -2 \\0
    \end{bmatrix}, \Phi(x,p)\right\rangle, \\
    \ell(\langle w, x_t \rangle,1) - \ell(\langle w, x_t \rangle,0) &= 1- 2\langle w, x_t \rangle  =\left\langle \begin{bmatrix}
        1 \\ 0 \\-2w
    \end{bmatrix}, \Phi(x,p)\right\rangle.
\end{align*}
Therefore, by \Cref{lemma:oi},  since $\norm{\Phi(x,p)}^2$ is uniformly bounded by  $2+ B^2$, and the vectors $v$ have (squared) norms bounded by $5$ and $1 + 4\norm{w}^2 \leq 1 + 4M^2$, \algname generates predictions $p_t$ that, 
\begin{align*}
\left| \sum_{t=1}^T \ell(\pi(p_t), y_t) - \sum_{t=1}^T \E_{\yt \sim p_t}[\ell(\pi(p_t), \yt_t)] \right| & \leq \sqrt{5T(2+ B^2)} \\ 
\sup_{w: \norm{w} \leq M}\left| \sum_{t=1}^T \ell(\langle w, x_t\rangle, y_t) -  \sum_{t=1}^T \E_{\yt \sim p_t}[\ell(\langle w, x_t\rangle, \yt_t)] \right| &\leq \sqrt{(1 + 4M^2)T(2+B^2)}\,.
\end{align*}
Applying \Cref{lemma:general_loss_min}, we get that the excess loss is bounded by the sum of these two upper bounds: 
\begin{align*}
    \sum_{t=1}^T (y_t - p_t)^2 \leq \min_{w: \norm{w} \leq M} \sum_{t=1}^T (y_t- \langle w, x_t \rangle)^2 + 2\sqrt{T (5 + 4M^2)(2+B^2)}\,.
\end{align*}

We note that there are other algorithms that achieve similar performance for linear prediction. Notably, given a step size parameter $\alpha > 0$, the online gradient method sets, 
\begin{align*}
	w_t &= w_{t-1} - \alpha (p_{t-1}-y_{t-1}) x_{t-1},	
\end{align*}
and predicts $p_t = \langle w_t, x_t \rangle$. This sequence of predictions achieves,
\begin{align*}
    \sum_{t=1}^T (y_t - p_t)^2 \leq \min_{w: \norm{w} \leq M} \sum_{t=1}^T (y_t- \langle w, x_t \rangle)^2 + \frac{M^2}{2 \alpha} + \frac{1}{2} \alpha M^2B^4 T\,.
\end{align*}
Setting $\alpha$ appropriately, the online gradient method has a similar $\sqrt{T}$ excess risk bound.
See, for example, Theorem 1 in \citet{zinkevich2003online}. As we can see by this expression, the incremental gradient method is also not making predictions about the future. Though we don't know how to derive online gradient descent as a form of \algname, it isn't too far away in spirit or in functional form.

% Indeed, we can compute the predictions computed by our \algname procedure to those produced by gradient descent. With the feature map from \Cref{eq:linear_regression_feature_map}, \algname performs anticorrelation search on the function
% \begin{align*}
%     S_t(p) = \langle x_t, \sum_{i=1}^{t-1} x_i(y_i-p_i) \rangle + \sum_{i=1}^{t-1} (y_i -p_i) + p \sum_{i=1}^{t-1} p_i (y_i-p_i).
% \end{align*}
% We can compute roots of this in closed form, yielding the defensive forecasting predictions 
% \begin{align*}
%     p_t = \frac{1}{\sum_{i=1}^{t-1}p_i(y_i-p_i)} \left( \langle x_t, \sum_{i=1}^{t-1} x_i(p_i-y_i) \rangle + \sum_{i=1}^{t-1} (p_i - y_i) \right).
% \end{align*}
% For the gradient method with constant stepsize, initialized with $w_0=0$, one can show by induction that the predictions are
% \begin{align*}
%     p_t^{(\mathrm{gm})} = \langle x_t, \alpha \sum_{i=1}^t x_i ( \langle 
% \end{align*}

% Comparing this expression to that of the online gradient method, we see that the defensive forecasting forecasts are a linear function of the online gradient descent predictions, 
% $
% p_t^{(\mathrm{df})} = \alpha_t p_t^{(\mathrm{gm})} + \beta_t.
% $

That said, \algname has an interesting extensible property that is not as obvious for all online learning methods: We can concatenate two \algname guarantees together just by concatenating the associated maps $\Phi$. In the next set of examples, we describe algorithms for generating calibrated predictions $p_t$. This will imply \algname algorithms that can yield predictions that both have low-regret \emph{and} are calibrated.

Before leaving regret minimization, it's worth recalling the original question we raised in the introduction. Are these prediction results good? The important point here is that in all problems with sublinear regret, the produced predictions are only as good as the baseline they are compared to. In this case, the baseline is a constant linear prediction function that has access to all of the data in advance. \emph{If} a linear function provides good predictions, \emph{then} \algname makes comparably good predictions. Once we make a commitment of how predictions will be evaluated and what they will be compared against, we can run \algname. But we reiterate there is no way to guarantee in advance whether the baseline itself provides a good fit to the data.

\section{\algname in Kernel Spaces}\label{sec:kdmm}

Before we introduce algorithms for calibration, we first show how to perform \algname with infinite dimensional $\Phi$. Note that the function $S_t$ used in \Cref{alg:dmm} is only a function of dot products between $\Phi$ at various $x$ and $p$. Hence, if we only had access to a \emph{kernel function} $k$ that computed such dot products $k( (x,p), (x',p')) = \langle \Phi(x,p), \Phi(x',p')\rangle$, we could still run \algname. We simply replace all dot products in the subroutine \Cref{alg:dmm} with kernel evaluations. This enables us to work with very rich, high-dimensional function spaces in a computationally-efficient manner. 

\begin{algorithm}[t]
\caption{\algname for Matching Empirical Moments in RKHS}\label{alg:kdmm}
\begin{algorithmic}[1]
\State Define $S_t(p) =\sum_{i=1}^{t-1} k((x_t,p),(x_i,p_i)) (y_i -p_i)$.
\State Run anticorrelation search (\Cref{alg:anti-search}) on $S_t$ to find $p_t$.
\end{algorithmic}
\end{algorithm}
For completeness, we write this out as \Cref{alg:kdmm}. The
analysis of \Cref{alg:kdmm} is almost exactly the same as that of \Cref{alg:dmm}. The main difference is the feature map is now potentially infinite-dimensional. It maps a point $(x,p)$ into a function from $\cX \times [0,1]$ into $\R$. Specifically,  $\Phi_k(x,p)$ is the \emph{function} with $\Phi_k(x,p)(x',p'):=k(x,p,x',p')$. With this mapping, we have a natural dot product between the functions $\Phi_k$, $\langle \Phi_k(x,p), \Phi_k(x',p')\rangle=k(x,p,x',p')$. 

This notation generalizes the finite-dimensional presentation thus far. Any feature map $\Phi(x,p)$ has a corresponding kernel given by its inner products $\langle \Phi(x,p), \Phi(x',p')\rangle$. The function space is the space of linear combinations of the coordinates. Any such function can be written as $f(x,p) = \langle \vartheta, \Phi(x,p)\rangle$ for some vector $\vartheta$. In this regard, \Cref{alg:dmm} is a special case of the kernelized version \Cref{alg:kdmm}.

The kernelized version of \algname was the one originally presented by \citet{vovk2007k29}. In particular, he proves the following proposition, which is a generalization of our presentation in Section~\ref{sec:dmm}. For completeness, we provide the proof in the Appendix.

\begin{proposition}
\label{prop:k29_feature_map}
Suppose $\cH$ is a reproducing kernel Hilbert space with kernel $k : \cX \times [0,1] \times \cX \times [0,1] \rightarrow \R$ that is continuous with respect to its second argument. Then, for all $h\in \cH$, \Cref{alg:kdmm} guarantees 
\begin{align*}
\left|\sum_{t=1}^T h(x_t,p_t) (y_t - p_t)\right| \leq \|h\|_\cH \sqrt{\sum_{t=1}^T(y_t-p_t)^2 k(x_t,p_t,x_t,p_t)}.
\end{align*}
This implies that if $\sup_{(x,p)}k(x,p,x,p)$ is uniformly bounded by $M^2$, then for any function $f(x,p,y)$ such that, $f(x,p,1) - f(x,p,0) = \langle v, \Phi_k(x,p) \rangle_{\cH}$, we have that
\begin{align*}
   \left|\frac{1}{T} \sum_{i=t}^T f(x_t,p_t,y_t) -  \frac{1}{T} \sum_{i=t}^T \E_{\yt_t\sim \Ber(p_t)}[f(x_t,p_t,\yt_t)]  \right| \leq \frac{M \|v\|_{\cH}}{\sqrt{T}}\,.
\end{align*}
\end{proposition}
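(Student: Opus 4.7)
The plan is to reduce the statement to the abstract \algname guarantee already proved in Section~\ref{sec:meta_algorithm}, with the feature map $\Phi_k(x,p) \in \cH$ playing the role that $\Phi$ played in \Cref{sec:dmm}. The first step is to verify that anticorrelation search (\Cref{alg:anti-search}) applied to $S_t(p) = \sum_{s=1}^{t-1} k((x_t,p),(x_s,p_s))(y_s - p_s)$ really produces a $p_t \in [0,1]$ satisfying the fundamental condition
$$
\sup_{y \in \{0,1\}} \langle F(x_t,p_t,y), \sum_{s=1}^{t-1} F(x_s,p_s,y_s)\rangle_{\cH} = \sup_{y \in \{0,1\}} (y - p_t) S_t(p_t) \leq 0,
$$
where $F(x,p,y) = (y-p)\Phi_k(x,p)$. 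Continuity of $k$ in its second argument makes $S_t$ continuous on $[0,1]$, so the case analysis is identical to the one in \Cref{sec:dmm}: either $S_t(1) \geq 0$ (pick $p_t=1$), or $S_t(0) \leq 0$ (pick $p_t=0$), or by the intermediate value theorem there is a root $p_t \in (0,1)$.

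Given this, the induction from \Cref{sec:meta_algorithm} applies verbatim in the Hilbert space $\cH$, yielding the diagonal bound
$$
\Big\|\sum_{t=1}^T (y_t - p_t)\Phi_k(x_t,p_t)\Big\|_{\cH}^2 \leq \sum_{t=1}^T (y_t - p_t)^2 \|\Phi_k(x_t,p_t)\|_{\cH}^2 = \sum_{t=1}^T (y_t - p_t)^2 k(x_t,p_t,x_t,p_t).
$$
Now to obtain the first displayed inequality of the proposition, I invoke the reproducing property: for any $h \in \cH$, $h(x_t,p_t) = \langle h, \Phi_k(x_t,p_t)\rangle_{\cH}$, so
$$
\sum_{t=1}^T h(x_t,p_t)(y_t - p_t) = \Big\langle h,\; \sum_{t=1}^T (y_t - p_t)\Phi_k(x_t,p_t)\Big\rangle_{\cH}.
$$
Cauchy-Schwarz in $\cH$ combined with the diagonal bound finishes the first claim.

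For the second claim, I will repeat the telescoping identity from the proof of \Cref{lemma:oi}: writing $y$ as a linear function of $y \in \{0,1\}$ and taking the difference with its Bernoulli expectation gives
$$
f(x_t,p_t,y_t) - \E_{\yt_t \sim \Ber(p_t)}[f(x_t,p_t,\yt_t)] = (y_t - p_t)\bigl[f(x_t,p_t,1) - f(x_t,p_t,0)\bigr] = (y_t - p_t)\langle v, \Phi_k(x_t,p_t)\rangle_{\cH}.
$$
Summing and applying the first claim with $h = v$ (viewed as an element of $\cH$) yields
$$
\Big|\sum_{t=1}^T f(x_t,p_t,y_t) - \sum_{t=1}^T \E_{\yt_t \sim \Ber(p_t)}[f(x_t,p_t,\yt_t)]\Big| \leq \|v\|_{\cH}\sqrt{\sum_{t=1}^T (y_t-p_t)^2 k(x_t,p_t,x_t,p_t)}.
$$
Bounding $(y_t-p_t)^2 \leq 1$ and $k(x_t,p_t,x_t,p_t) \leq M^2$ and dividing by $T$ gives the stated $M\|v\|_{\cH}/\sqrt{T}$ rate.

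The only real subtlety is the existence step for $p_t$: one must justify that the infinite-dimensional summand in the fundamental condition still collapses to a scalar-valued continuous function of $p$, so that the same elementary case analysis used for \Cref{alg:dmm} carries over. Once that is in place, everything else is the reproducing-kernel version of identities already established, and no topology or fixed-point machinery is needed.
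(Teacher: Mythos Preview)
Your proposal is correct and follows essentially the same approach as the paper's proof: reproducing property plus Cauchy--Schwarz, then the diagonal bound via the inductive argument of \Cref{sec:meta_algorithm} using the invariant $(y-p_t)S_t(p_t)\le 0$, and finally the \Cref{lemma:oi} identity for the second claim. Your explicit verification that anticorrelation search succeeds (via continuity of $S_t$) is a detail the paper's appendix proof takes for granted, but otherwise the arguments match.
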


\section{Calibration}\label{sec:calibration}

% As discussed in the introduction, \algname enables simple and strong algorithms for various different problems. Here, we illustrate how if can be used to guarantee calibration. 

% Calibrated predictions have debatable utility, but many communities of forecasters demand calibrated predictions for various reasons. 

A sequence of probabilistic predictions is perfectly \emph{calibrated} if the fraction of all times where the prediction $p_t$ is equal to $\alpha$ and $y_t$ is equal to $1$ is approximately $\alpha$. Sometimes one needs to state this definition multiple times in multiple ways for it to sink in. It rains on 30 percent of the days where a calibrated weather forecaster predicts a 30 percent chance of rain. Calibrated forecasts are those where the predictions correspond to observed frequencies. The utility of calibration comes in terms of communication: when a calibrated forecast declares a percent chance, this is reflected by a relative correspondence between the frequency of similar outcomes. When a calibrated forecast asserts a probability $p$, their track record shows that such events happen $p$-fraction of the time. 

Let us put aside the utility of calibrated predictions for a moment and turn to \algname procedures that generate calibrated predictions. Note that a set of predictions is calibrated if for all $\alpha \in [0,1]$
$$
    \frac{\sum_{t=1}^T \ind{y_t=1, p_t=\alpha}}{ \sum_{t=1}^T \ind{p_t=\alpha} } = \alpha 
$$
Rearranging this expression, an equivalent form of calibration is satisfying that for all $\alpha$
\begin{align}
\label{eq:perfect_calibration_og}
  \sum_{t=1}^T (y_t-\alpha) \ind{p_t=\alpha} = 0\,.
\end{align}
Since the indicator function is equating $p_t$ with $\alpha$ we can write this condition equivalently as
\begin{align}
\label{eq:perfect_calibration}
  \sum_{t=1}^T (y_t-p_t) \ind{p_t=\alpha} = 0\,.
\end{align}
This formulation looks like Defensive Moment Matching, where the feature function is the infinite dimensional function indexed by $\alpha \in [0,1]$, $\Phi(p) =(\ind{p_t=\alpha})$. The tricky part is that the indicator functions are not continuous in the predictions $p$. People have introduced a number of different definitions of approximate calibration that measure ``closeness" to perfect calibration. We will now show how several of the most popular ones in the literature can be efficiently attained by \algname.

%Similar to our risk minimization results, we apply the exact same algorithm, \Cref{alg:dmm}. The key idea is instantiate the algorithm with a particular choice of feature mapping $\Phi$ and use a key property of \algname that we summarize below.
%\begin{lemma}\label{lemma:calibration}
%Let $F(x,p,y) = (y-p)\Phi(x,p)$ and suppose that for some constant $C$,
%\begin{align*}
%   \norm{ \sum_{i=t}^T (y_t-p_t) \Phi(x_t,p_t)} \leq C\sqrt{T}\,.
%\end{align*}
%Then, for any function $f(x,p) =\langle v, \Phi(x,p) \rangle$, we have
%\begin{align}
%\label{eq:df_calibration}
%   \left|\sum_{i=t}^T f(x_t,p_t) (y_t - p_t)  \right| \leq  C \norm{v} \sqrt{T}\,.
%\end{align}
%\end{lemma}
%The lemma states that if the norm of $\sum_{t=1}^T F(x_t,p_t,y_t)$ is $\cO(\sqrt{T})$ for $F$ equal to $(y-p)\Phi(x_t,p_t)$ then the sequence of predictions $p_t$ is approximately calibrated with respect to all functions $f$ that lie in the span of the features $\Phi$. To see why they are approximately calibrated, note that the left hand side of \Cref{eq:df_calibration} identifies with that of \Cref{eq:perfect_calibration} if we let $f(x,p) = 1\{p=\alpha\}$. The approximately comes from the fact that this quantity is not exactly zero, but rather $o(T)$.
%
%We can recover various different definitions of calibration by varying the functions $f$.

Perhaps the easiest interesting definition that is achievable is that of \emph{smooth 
 (or weak) calibration} by~\citet{kakade2004deterministic}. The smooth calibration error of a sequence of predictions $p$ is,
\begin{align*}
    \SM(p )= \sup_{f \in \cF_{\mathrm{Lip}}} \left|\sum_{t=1}^T f(p_t)(y_t - p_t)\right|\,,
\end{align*}
where $\cFLip$ is the (infinite) set of 1-Lipschitz functions from $[0,1]$ to $[0,1]$. To see the relationship between smooth calibration and calibration, consider a continuous approximation of the indicator function $\ind{p=\alpha} \approx w_\epsilon(p-\alpha)$ where 
$$
	w_\epsilon(x) := \begin{cases}
		1+\frac{x}{\epsilon} & x \in [-\epsilon,0]\\
		1-\frac{x}{\epsilon} & x \in [0,\epsilon]\\
		0 & \text{otherwise}
	\end{cases}\,.
$$
Then the Lipschitz constant of $w_\eps(x)$ is $1/\epsilon$. We'll return to this example in detail momentarily.

It remains to show that we can make smoothly calibrated predictions using \algname. This was first proven by~\citet{vovk2007k29}. Following his notation, we can define a norm on the space of differentiable functions from $[0,1]$ to $[0,1]$ as
$$
	\|f\|_{\mathrm{FS}}^2 = \left( \int_0^1 f(t) dt \right)^2 + \int_0^1 |f'(t)|^2 dt\,.
$$
The completion of this normed space is a Hilbert space called the Fermi-Sobolev space. In particular, every 1-Lipschitz function is differentiable almost everywhere and has a finite Fermi-Sobolev norm.  It turns out that the Fermi-Sobolev space is an RKHS, and based on the work of \citet{wahba1975smoothing}, Vovk derives a simple form for its kernel function
\begin{align}
\label{eq:fs_kernel}
   k_\mathrm{FS}(p,p') = \tfrac{1}{2}\min(p,p')^2 + \tfrac{1}{2}\min(1-p,1-p')^2 + \tfrac{5}{6}\,.
\end{align}

We can thus use ideas from the previous section to efficiently produce predictions that are smoothly calibrated. In particular, there exists a feature map such that any function $f \in \cFLip$ has FS norm at most $\sqrt{2}$. Running the version of \algname from \Cref{alg:kdmm} with this kernel, we will get a sequence of predictions with $\SM(p) \leq \sqrt{2T}$.

Moreover, for any $\alpha \in [0,1]$ and $\epsilon \leq \min(\alpha,1-\alpha)$, define $h_{\epsilon,\alpha}(p)=w_\epsilon(p-\alpha)$. The functions $h_{\epsilon,\alpha}$ map $[0,1]$ to $[0,1]$ and have small Fermi-Sobolev norm because $\int_{-1}^1 w_\epsilon(t) dt=\epsilon$ and $\int_{-1}^1 |w_\epsilon'(t)|^2 dt = 2/\epsilon$,  so we have 
\begin{equation}\label{eq:w_eps_norm}
\|h_{\epsilon,\alpha}\|_{\mathrm{FS}}= \sqrt{\epsilon^2 + \frac{2}{\epsilon}}\,.
\end{equation}

Let's now use this result to derive a randomized algorithm with calibrated predictions. The key will be to generate a sequence of predictions using Defensive Forecasting and then reveal predictions by rounding them to an equispaced grid. The following proposition is a simplification of the procedure presented in \citet{kakade2004deterministic}.

\begin{proposition}\label{prop:rounded_calibration}
Generate a sequence of predictions by running \Cref{alg:kdmm} with the Fermi-Sobolev kernel defined in \Cref{eq:fs_kernel}. Reveal the forecasts  
\begin{align*}
\mathrm{Round}_{N}(p_t) = 
\frac{1}{N} \begin{cases}
\lfloor N p_t \rfloor \text{ with probability } \operatorname{frac}(N p_t)\\ 
\lfloor N p_t \rfloor+1 \text{ with probability }1-\operatorname{frac}(N p_t) \\ 
\end{cases}
\end{align*}
where $\lfloor x \rfloor$ denotes the floor of $x$ and $\operatorname{frac}$ denotes the fractional part. Then with probability $1-\delta$ in the rounding procedure, we have for all integers $n$ between $0$ and $N$
\begin{align*}
 \left|\sum_{t=1}^T \ind{\mathrm{Round}_{N}(p_t) = \tfrac{n}{N} }\left(\tfrac{n}{N}-y_t\right) \right| & \leq \sqrt{T} \left(\sqrt{\frac{8N+2}{3}} + \sqrt{2\log(2 (N+1) / \delta)}  \right) + \frac{T}{2N}\,.
\end{align*}
\end{proposition}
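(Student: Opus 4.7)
The central observation is that, viewed as a function of $p_t$, the conditional rounding probability $q_{n,t} := \Pr[\mathrm{Round}_N(p_t) = n/N \mid p_t]$ coincides exactly with the tent function $h_{1/N,n/N}$ appearing in \Cref{eq:w_eps_norm}: it equals $1$ at $p_t = n/N$, decays linearly on either side, and vanishes at $(n\pm 1)/N$. This identification is the bridge that transports the discrete rounding event into the smooth-calibration world controlled by \Cref{prop:k29_feature_map}.

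Writing $R_t = \mathrm{Round}_N(p_t)$, I would start from the three-way decomposition
\begin{align*}
\sum_{t=1}^T \ind{R_t = n/N}\bigl(\tfrac{n}{N}-y_t\bigr)
&= \sum_{t=1}^T\bigl(\ind{R_t = n/N} - q_{n,t}\bigr)\bigl(\tfrac{n}{N}-y_t\bigr) \\
&\quad + \sum_{t=1}^T q_{n,t}(p_t-y_t) + \sum_{t=1}^T q_{n,t}\bigl(\tfrac{n}{N}-p_t\bigr),
\end{align*}
corresponding respectively to rounding noise, smooth calibration, and discretization bias.

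The first sum is a martingale difference sequence with respect to the rounding filtration, with each term bounded in absolute value by $1$; Azuma--Hoeffding yields a deviation of at most $\sqrt{2T\log(2(N+1)/\delta)}$ with probability $1 - \delta/(N+1)$, and a union bound over $n \in \{0,\ldots,N\}$ makes this uniform with total failure probability $\delta$. The second sum I would control by applying \Cref{prop:k29_feature_map} with $h = h_{1/N,n/N}$: \Cref{eq:w_eps_norm} at $\epsilon = 1/N$ gives $\|h\|_{\mathrm{FS}}^2 = 1/N^2 + 2N$, while the closed form \Cref{eq:fs_kernel} yields $k_{\mathrm{FS}}(p,p) \leq 4/3$ on $[0,1]$; multiplying and simplifying (the $4/(3N^2)$ piece is absorbed into the constant term) gives $\sqrt{T(8N+2)/3}$. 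The third sum is purely deterministic: the pointwise product $h_{1/N,n/N}(p)(n/N - p)$ is small because the hat function vanishes precisely where $|n/N - p|$ is maximized, which gives a bound of at most $T/(2N)$. A triangle inequality then combines the three pieces into the stated guarantee.

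The main technical hurdle is really just the first step---recognizing the conditional rounding probability as the Fermi--Sobolev hat function. Once that link is in place, the Azuma estimate on the noise, the plug-in into \Cref{prop:k29_feature_map} on the calibration term, and the pointwise product bound on the bias are all routine.
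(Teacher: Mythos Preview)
Your argument is correct and mirrors the paper's approach: identify the conditional rounding probability with the tent function $w_{1/N}(\cdot-n/N)$, control the smooth-calibration piece via \Cref{prop:k29_feature_map} together with the norm bound \Cref{eq:w_eps_norm} and $\sup_p k_{\mathrm{FS}}(p,p)\le 4/3$, handle the rounding fluctuation by Azuma--Hoeffding with a union bound over the $N{+}1$ grid points, and absorb the discretization bias as $O(T/N)$. The one cosmetic difference is that your three-term split isolates the bias as the \emph{deterministic} sum $\sum_t q_{n,t}(n/N-p_t)$, bounded pointwise by $\sup_{|u|\le 1/N}(1-N|u|)|u|=1/(4N)\le 1/(2N)$, whereas the paper keeps the random term $\sum_t \ind{R_t=n/N}(n/N-p_t)$ and bounds it at the end via $|p_t-n/N|$ being small on the rounding event; your version lands on the stated $T/(2N)$ a bit more cleanly.
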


Note that the rounding algorithm here maps the prediction to its closest point on a grid with spacing $1/N$. For instance, if $N=20$, the rounding operation maps $p_t=.89$ to $v'=.9$ with probability $.2$ and to $v=.85$ with probability $.8$. If $N$ is chosen to be equal to $T^{1/3}$, the corresponding bound has a regret of $T^{2/3}$ that is the regret accrued by most sequential calibration algorithms~\citep{Dagan25}.

\begin{proof}
This operation is, in expectation, a smooth function of the prediction. In particular, for any integer $n$ between $0$ and $N$,
\begin{align*}
     \E[\ind{\mathrm{Round}_{N}(p_t) = \tfrac{n}{N}}(p_t-y_t)] =  w_{\tfrac{1}{N}}(p_t-\tfrac{n}{N}) (p_t-y_t) \,.
\end{align*}
Consequently, for any fixed sequence of predictions and outcomes, 
\begin{align*}
\sum_{t=1}^T \E [1\{\mathrm{Round}_{N}(p_t) = \tfrac{n}{N}\}(p_t-y_t)]  =  \sum_{t=1}^T w_{\tfrac{1}{N}}\left(p_t-\tfrac{n}{N}\right) (p_t -y_t)
\end{align*}
Now define 
$$
Y_t = \ind{\mathrm{Round}_{N}(p_t) = \tfrac{n}{N}}(p_t-y_t) -  w_{\tfrac{1}{N}}(p_t-\tfrac{n}{N}) (p_t -y_t)\,.
$$ Since each prediction is rounded independently and $\E[Y_t] = 0$, the partial sums of the random variables $Y_t$ form a martingale. The Azuma-Hoeffding inequality thus implies with probability $1-\delta$: 
\begin{align*}
 \left|\sum_{t=1}^T 1\{\mathrm{Round}_{N}(p_t) = \tfrac{n}{N}\}(p_t-y_t) \right|& \leq \left| \sum_{t=1}^T w_\epsilon(p_t-\tfrac{n}{N}) (p_t -y_t)\right| + \sqrt{2T\log(2 / \delta)}\, .
\end{align*}
By, \Cref{prop:k29_feature_map} we have 
$$
\left| \sum_{t=1}^T w_\epsilon(p_t-\tfrac{n}{N}) (p_t -y_t)\right| \leq  \sqrt{T} \cdot \left\{ \sup_{p} \sqrt{k_{\mathrm{FS}} (p,p)} \right\} \cdot \left\|h_{\frac{1}{N},\frac{n}{N}}\right\|_\mathrm{FS}\,. 
$$
\citet{vovk2007k29} shows $\sup_{p} k_{\mathrm{FS}} (p,p) \leq \frac{4}{3}$.  \Cref{eq:w_eps_norm} yields $\left\|h_{\frac{1}{N},\frac{n}{N}}\right\|_\mathrm{FS} = \sqrt{N^{-2} + 2N}$. Tying these bounds together with inequality above and taking a union bound over all $1\leq n\leq N$ gives.
$$
\left |\sum_{t=1}^T 1\{\mathrm{Round}_{\Delta}(p_t) = \tfrac{n}{N}\}(p_t-y_t) \right| \leq \sqrt{\tfrac{4}{3} T} \sqrt{\tfrac{1}{2} + 2N} + \sqrt{2T\log(2(N+1) / \delta)}\,.
$$
The theorem follows because $|p_t - \frac{n}{N}| \leq \tfrac{1}{2N}$ when $\mathrm{Round}_{\Delta}(p_t) = \tfrac{n}{N}$.
\end{proof}

There are many popular definitions of calibration, and few can agree on what the right one is. \citet{qiao2024distance} gives a laundry list of different notions. They in particular show that smooth calibration is within $O(\sqrt{T})$ of several notions of \emph{distance to calibration}, studied by \citet{blasiok2023unifying}. Hence, any method achieving smooth calibration also yields decent distance to calibration.

An additional key feature of \algname with kernels is that you can easily satisfy multiple objectives at once. If you want to produce predictions that are smootly calibrated but also have a Brier Score comparable to a linear prediction function, you can use the kernel
$$
 k(x,p,x',p') =1 + k_{\mathrm{FS}} (p,p') + pp' +  \langle x, x' \rangle 
$$
Running \Cref{alg:kdmm} with this kernel would yield predictions satisfying the following inequalities, 
\begin{align*}
    \sum_{t=1}^T (y_t -p_t)^2 - \min_{w: \norm{w} \leq M} (p_t-\langle x_t,w\rangle)^2 &\leq 5\sqrt{(1+M) (1+ \max_{1\leq t\leq T} \norm{x_t})T} \\ 
    \sup_{f \in \cF_{\mathrm{lip}}}| \sum_{t=1}^t f(p_t)(y_t-p_t) |& \leq 2\sqrt{T}. 
\end{align*}
Perhaps instead, you'd like a predictor that compares well to smooth functions of the revealed contexts $x_t$. Then you'd use
$$
 k(x,p,x',p') =1 + k_{\mathrm{FS}} (p,p') + pp' +  \exp\left(-\gamma \|x- x'\|^2\right)\,.
$$

Perhaps the main question before the analyst is why they want calibrated predictions at all. \citet{foster2021forecast} argue that philosophically, it is better for forecasters to return calibrated probababilistic predictions since consumers of those forecasts can interpret the forecasts in terms of probabilities. Calibration allows analysts to assert, as a FiveThirtyEight headline put it\footnote{https://fivethirtyeight.com/features/when-we-say-70-percent-it-really-means-70-percent/}, ``When We Say 70 Percent, It Really Means 70 Percent.'' 

But we emphasize that beyond this property of shared interpretation, calibration doesn't mean much. As many have emphasized before, a set of predictions can be perfectly calibrated and essentially useless for the purposes of decision-making. If a sequence of outcomes is a string of random bits with an equal number of $1$'s and $0$'s, then predicting $p_t=1/2$ will be calibrated and it will be the best constant prediction as well. The value of any prediction algorithm is only good if its assumptions about the future turn out to be correct (say, that a constant prediction has low prediction error). Unfortunately, no algorithm can guarantee the future will look like the past.

\section{$\ell_\infty$ \algname \& Expert Prediction}\label{sec:experts}

We conclude the paper with two examples showing the versatility of the \algname paradigm. In this section, we demonstrate how to derive \algname algorithms that compete with \emph{expert predictions}. The setup is similar to what we've seen thus far. At each time $t$, we observe a context vector $x_t$. We also receive the predictions of $N$ experts, $f_j(x_t)$. Our goal is to make a prediction so that
\begin{equation}\label{eq:basic-experts}
	\lim_{T \rightarrow \infty} \frac{1}{T}\sum_{t=1}^T \ell(p_t, y_t) -  \inf_{1\leq j\leq N} 
 \frac{1}{T} \sum_{t=1}^T \ell(f_j(x_t), y_t)  \leq 0 \,.
\end{equation}
where $\ell$ is a prespecified loss function that measures prediction error.

Though he doesn't provide an explicit algorithm, the existence of a \algname approach to this problem was proven by~\citet{vovk2007defensive}. Here we provide a simple algorithm based on anticorrelation search. The core idea is to minimize the $\ell_\infty$ norm of the sum of vectors,
$$\left\| \sum_{t=1}^T F(x_t,p_t,y_t) \right\|_{\infty}.$$
To do this, we minimize the soft-max function, a smooth upper bound to the max function,
$$
 \left\| \sum_{t=1}^T F(x_t,p_t,y_t) \right\|_{\infty}  \leq \log \sum_{j=1}^N \exp\left( \sum_{t=1}^T F_j(x_t,p_t,y_t)\right)\,.
$$
where $F_j(x_t,p_t,y_t)$ is the $j$th coordinate of $F(x_t,p_t,y_t) \in \R^N$. Note that this soft-max surrogate admits the following  recursion,
$$
M_t = \log \sum_{j=1}^N \exp\left( \sum_{s=1}^t F_j(x_s,p_s,y_s) \right) = M_{t-1} +  \log  \sum_{i=1}^N \alpha_i \exp\left(F_i(x_t,p_t,y_t) \right)
$$
where,
$$
	\alpha_i = \frac{\exp( \sum_{s=1}^{t-1}  F_i(x_s,p_s,y_s) ) }{\sum_{j=1}^N \exp( \sum_{s=1}^{t-1} F_j(x_s,p_s,y_s) )} \,.
$$
If we can thus find a $p_t$ such that 
\begin{align*}
    \sup_{y \in \{0,1\}}  \log  \sum_{i=1}^N \alpha_i \exp\left(F_i(x_t,p_t,y) \right) \leq 0,
\end{align*}
we will prove that $M_t \leq M_0$, which gives us a bound on the maximum we desire. 

This is a \algname strategy. The goal again is to make predictions so that a particular function is negative no matter the actualized outcomes. \Cref{alg:linf_df} produces a sequence of predictions that follows this strategy. It works for a generalization of the experts problem, finding a sequence of predictions so that
$$
	\frac{1}{T}\max_{1\leq j \leq N} \sum_{t=1}^{T} F_j(x_t,y_t,p_t)  \rightarrow 0
$$
as $T$ goes to $\infty$. To specialize it to the experts problem, we simply let the entries in $F$ correspond to the gaps to each expert,
\begin{align*}
    F(x_t,p_t,y_t) = \begin{bmatrix}
        \ell(p_t, y_t) - \ell(f_1(x_t), y_t) \\ 
        \dots \\
        \ell(p_t, y_t) - \ell(f_N(x_t), y_t) 
    \end{bmatrix}\,.
\end{align*}

% That is, rather than controlling the size of the $\ell_2$ norm of a vector-valued function $F$, we now aim to control its $\ell_\infty$ norm. This problem is a generalization of \Cref{eq:basic-experts} where $F_j(x_t,y_t,p_t) = \ell(p_t, y_t)-\ell(f_j(x_t), y_t)$. By making predictions to minimize the sum of the exponentials of $F_j$, we get predictions that make the maximum of $F_j$ small.

\begin{algorithm}[t]
\caption{$\ell_\infty$ \algname}\label{alg:linf_df}
\begin{algorithmic}[1]
\State Define  $$Q_{jt} =  \sum_{s=1}^{t-1} F_j(x_s,p_s,y_s)\,.$$
\State Define $$\alpha_{jt} = \frac{\exp(Q_{jt} ) }{ \sum_{j=1}^N \exp(Q_{jt})}\,.$$
\State Define $$S_t(p) = \sum_{j=1}^N \alpha_{jt} \exp\left( F_j(x_{t},p,1)\right) - \sum_{j=1}^N \alpha_{jt} \exp\left( F_j(x_{t},p,0) \right)$$
\State Run anticorrelation search (\Cref{alg:anti-search}) on $S_t$ to find $p_t$.
\end{algorithmic}
\end{algorithm}

While \Cref{alg:linf_df} does not work for arbitrary $F_j$, it works for all $F_j$ satisfying the following Assumption. We will describe cases (e.g. log loss) where this Assumption holds in the sequel.

\begin{assumption}\label{assumption:bounded}
For all $j$ the functions $F_j$ satisfy, 
$$p\exp\left( F_j(x,p,1)  \right) + (1-p) \exp\left( F_j(x,p,0)  \right) \leq 1,$$ 
for all $(x,p)$.
\end{assumption}

\begin{proposition}\label{prop:linf_df} Under~\Cref{assumption:bounded}, \Cref{alg:linf_df} returns a sequence satisfying
$$
	\max_{1\leq j \leq N} \sum_{s=1}^{T} F_j(x_s,p_s,y_s) \leq \log(N)\,.
$$
\end{proposition}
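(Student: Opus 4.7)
The plan is to mimic the general soft-max telescoping argument sketched just before the algorithm and reduce the proposition to verifying a one-step inequality about the quantity
\[
A(p,y) \;\defeq\; \sum_{j=1}^N \alpha_{jt}\exp\bigl(F_j(x_t,p,y)\bigr).
\]
Concretely, set $M_t \defeq \log \sum_{j=1}^N \exp(Q_{j,t+1}) = \log \sum_{j=1}^N \exp\bigl(\sum_{s=1}^{t} F_j(x_s,p_s,y_s)\bigr)$, so that $M_0=\log N$ and $M_T \geq \max_{j}\sum_{s=1}^T F_j(x_s,p_s,y_s)$. The recursion in the discussion preceding the algorithm gives $M_t = M_{t-1} + \log A(p_t,y_t)$. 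It therefore suffices to show that, for the $p_t$ returned by anticorrelation search, $A(p_t,y)\leq 1$ for both $y\in\{0,1\}$; then $M_T\leq M_0 = \log N$ and the proposition follows.

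To establish the one-step bound, I would first take a convex combination of Assumption~\ref{assumption:bounded} weighted by $\alpha_{jt}$, which yields the single inequality
\[
p\,A(p,1) + (1-p)\,A(p,0) \leq 1 \qquad \text{for every } p\in[0,1].
\]
Now I split into the three cases produced by anticorrelation search on $S_t(p)=A(p,1)-A(p,0)$. If $S_t(p_t)=0$ for an interior root, then $A(p_t,1)=A(p_t,0)$, and the displayed inequality collapses to $A(p_t,0)=A(p_t,1)\leq 1$, so we are done. If $p_t=1$ because $S_t(1)\geq 0$, the assumption at $p=1$ reads $A(1,1)\leq 1$, and $S_t(1)\geq 0$ then forces $A(1,0)\leq A(1,1)\leq 1$. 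The case $p_t=0$ with $S_t(0)\leq 0$ is symmetric: the assumption gives $A(0,0)\leq 1$ and $S_t(0)\leq 0$ yields $A(0,1)\leq A(0,0)\leq 1$.

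The only remaining routine check is that anticorrelation search is well-defined: $S_t$ is continuous in $p$ because each $F_j(x_t,\cdot,y)$ is (an exponential of) a continuous function, and in the residual branch of the algorithm we have $S_t(0)>0$ and $S_t(1)<0$, so the intermediate value theorem supplies a root in $(0,1)$. Combining the three cases gives $\log A(p_t,y_t)\leq 0$ for every realized $y_t$, which telescopes to $M_T\leq M_0=\log N$ and hence to $\max_j \sum_{s=1}^T F_j(x_s,p_s,y_s)\leq \log N$.

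The main obstacle is not algebraic but conceptual: seeing that the convex-combination form of Assumption~\ref{assumption:bounded} interlocks exactly with the sign information produced by anticorrelation search, so that one inequality ($p\,A(p,1)+(1-p)A(p,0)\leq 1$) plus one sign of $S_t$ is always enough to bound $A(p_t,\cdot)$ at \emph{both} outcomes simultaneously. Once that observation is in place, everything else is bookkeeping in the soft-max potential $M_t$.
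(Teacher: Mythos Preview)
Your proof is correct and follows essentially the same route as the paper: both arguments track the soft-max potential $M_t=\log\sum_j\exp(\sum_{s\le t}F_j)$, reduce the induction step to showing the bracketed quantity $y\,A(p_t,1)+(1-y)\,A(p_t,0)\le 1$, and close with the convex combination of Assumption~\ref{assumption:bounded} weighted by the $\alpha_{jt}$. The only cosmetic difference is that the paper compresses your three-case analysis into the single line ``the sup over $q\in[0,1]$ of $qA(p_t,1)+(1-q)A(p_t,0)$ equals its value at $q=p_t$ by construction,'' which is exactly your observation that $S_t(p_t)=0$, or $p_t=1$ with $S_t(1)\ge 0$, or $p_t=0$ with $S_t(0)\le 0$.
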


\begin{proof}
Let
$$
	V_{t} = \sum_{j=1}^N \exp\left(  \sum_{s=1}^{t} F_j(x_s,p_s,y_s) \right)\,.
$$
Since $\log V_t \geq  \max_j  \sum_{s=1}^{t} F_j(x_s,p_s,y_s)$, it suffices to show that $V_T \leq N$.

We proceed, as usual, by induction. Note that $V_0=N$. We now show $V_T \leq V_{T-1}$. First, for simplicity of notation, let $\alpha_j = \alpha_{j,T-1}$ in what follows. Then we have
\begin{align*}
V_T=&y_T \sum_{j=1}^N \exp\left(\sum_{s=1}^{T-1} F_j(x_s,p_s,y_s) +F_j(x_T,1,p_T) \right)\\
&\qquad\qquad	+(1-y_T) \sum_{j=1}^N \exp\left(\sum_{s=1}^{T-1} F_j(x_s,p_s,y_s) + F_j(x_T,0,p_T)\right)\\
	=&V_{T-1} \left\{y_T\sum_{j=1}^N \alpha_{j} \exp\left( F_j(x_T,1,p_T) \right) + (1-y_T) \sum_{j=1}^N \alpha_{j} \exp\left( F_j(x_T,0,p_T) \right) \right\}\,.
\end{align*}
It suffices to show that the term inside the curly brackets is less than or equal to 1. Indeed, we have:
\begin{align*}
& y_T\sum_{j=1}^N \alpha_{j} \exp\left(F_j(x_T,1,p_T)  \right) + (1-y_T) \sum_{j=1}^N \alpha_{j} \exp\left( F_j(x_T,0,p_T)  \right)\\
	\leq & \sup_{q\in [0,1]} q\sum_{j=1}^N \alpha_j \exp\left( F_j(x_T,1,p_T)  \right) + (1-q) \sum_{j=1}^N \alpha_j \exp\left( F_j(x_T,0,p_T)  \right) \\
	= & p_T \sum_{j=1}^N \alpha_j \exp\left( F_j(x_T,1,p_T)  \right) + (1-p_T) \sum_{j=1}^N \alpha_j \exp\left( F_j(x_T,0,p_T)  \right)\;.
	\end{align*}
The equality in line three holds because this is how we chose $p_T$.  Now, for any $p\in [0,1]$, we have
\begin{align*}
	& p\sum_{j=1}^N \alpha_j \exp\left( F_j(x_T,1,p)  \right) + (1-p) \sum_{j=1}^N \alpha_j \exp\left( F_j(x_T,0,p)  \right) \\
	=& \sum_{j=1}^N \alpha_j  \left\{p\exp\left( F_j(x_T,1,p)  \right) + (1-p) \exp\left( F_j(x_T,0,p)  \right) \right\}\leq 1\,.
\end{align*}
The final inequality here follows from \Cref{assumption:bounded}.
\end{proof}

Note that the bound \Cref{prop:linf_df} is independent of $T$. Let's now apply this result to the problem of prediction with expert advice. As was the case in Sections~\ref{sec:risk_minimization} and~\ref{sec:linear_bandits}, \algname makes good predictions whenever there is a single expert that makes good predictions.

\subsection{Squared Loss}
When the loss function $\ell$ is the squared loss, we set,
$$
	F_j(x,p,y) = \lambda\{ (p-y)^2 - (f_j(x)-y)^2 \},
$$
for a constant $\lambda$. Our goal is to show that this family of $F_j$ satisfy \Cref{assumption:bounded} for all $\lambda \in[0,2]$. 

Here we'll make use of a special case of Hoeffding's Lemma:
$$
	p \exp(a) + (1-p) \exp(b) \leq \exp\left( pa + (1-p) b + \frac{(b-a)^2}{8} \right)\,.
$$
Then we have
\begin{align*}
	p F_j(x,p,1) + (1-p) F_j(x,p,0) &= -\lambda (p-f_j(x))^2\\
	(F_j(x,p,1)-F_j(x,p,0))^2 &= 4 \lambda^2 (p-f_j)^2\,.
\end{align*}
This gives
\begin{align*}
p\exp\left( F_j(x,p,1)  \right) + (1-p) \exp\left( F_j(x,p,0)  \right) &\leq \exp\left( \left\{-\lambda + \tfrac{1}{2} \lambda^2 \right\} (p-f_j)^2 \right)\,.
\end{align*}
which is less than or equal to $1$ if $\lambda \leq 2$. Hence, \Cref{assumption:bounded} holds and by \Cref{prop:linf_df}, \Cref{alg:linf_df} returns predictions $p_t$ satisfying
$$
	\frac{1}{T}\sum_{t=1}^T (p_t-y_t)^2 -  \inf_{1\leq j\leq N} \frac{1}{T}\sum_{t=1}^T (f_j(x_t)-y_t)^2 \leq \frac{\log(N)}{2T}\,.
$$

\subsection{Log Loss}
For log loss regret minimization, 
$$
	F_j(x,p,y) = - y (\log p - \log f_j(x)) - (1-y) (\log(1-p) -\log(1-f_j(x)) \,.
$$
Plugging in the definition, we immediately see
\begin{align*}
	p\exp\left( F_j(x,p,1)  \right) + (1-p) \exp\left( F_j(x,p,0)  \right) = f_j(x)+ (1-f_j(x))=1\,.
\end{align*}
Hence, \Cref{assumption:bounded} again holds and by \Cref{prop:linf_df}, \Cref{alg:linf_df} returns predictions $p_t$ satisfying
$$
	\frac{1}{T}\sum_{t=1}^T \ell(p_t,y_t) -  \inf_{1\leq j\leq N} \frac{1}{T}\sum_{t=1}^T \ell(f_j(x_t),y_t) \leq \frac{\log(N)}{T}\,.
$$
As a final remark, all of these results also hold in the setting where experts see the forecaster's predictions and make potential refinements. That is, all of the results in this section hold when $f_j$ take as input both $x$ \emph{and} $p$. Vovk calls these ``second-guessing'' experts. \citet{lee2022online} analyze a similar exponential weights approach to the one we present above that guarantees $\cO(\sqrt{T\log(N)})$ regret with respect to general losses. 

\section{Quantiles}\label{sec:quantiles}

\algname strategies can also be applied to real-valued predictions. In this section, we focus on the particular example of quantile prediction. This will allow us to draw connections to online conformal prediction and also introduce a new algorithmic technique, adapted from~\citet{foster1999proof}, for \algname problems with discontinuities.

Suppose that we want to make predictions about the quantiles of a sequence of real-valued outcomes $y_t \in (\Ymin,\Ymax]$. In an online setting, to start, we'd like to find predictions $p_t$ such that on average, we are predicting an accurate estimate of the quantile:
\begin{equation}\label{eq:dumb-quantile}
	\lim_{T\rightarrow \infty} \left|\frac{1}{T} \sum_{t=1}^T 1\{y_t \leq p_t\}  - q \right| = 0\,.
\end{equation}
This condition at first glance seems intimately related to the goal of estimating a quantile since it can only be achieved if $y_t$ is at most $p_t$ a $q$ fraction of the time.

The standard \algname strategy achieves this goal. Define $S_t:= \sum_{i=1}^{t-1} 1\{y_i \leq p_i\} - q (t-1)$. If $S_t \leq  0$,  predict $p_t = \Ymax$, otherwise predict $p_t = \Ymin$. At every time $t$, the forecasts satisfy
\begin{align*}
    \sup_{y \in (\Ymin, \Ymax]} (1\{y \leq p_t\} - q) \sum_{i=1}^{t-1} (1\{y_i \leq p_i\} - q) \leq 0\, .
\end{align*}
This is true because if $S_t =  \sum_{i=1}^{t-1} (1\{y_i \leq p_i\} - q) \leq 0$ predicting $p_t = \Ymax$ guarantees that $1\{y \leq p_t\} - q \geq 0$ and hence the product is negative. If $S_t \geq 0$, then choosing $p_t = \Ymin$ ensures that $1\{y \leq p_t\} - q \leq 0$ and we get the same invariant. By induction, we thus again have: 
\begin{align*}
    \left(\sum_{t=1}^T 1\{y_i \leq p_i\} - q \right)^2 \leq  \sum_{t=1}^T (1\{y_i \leq p_i\} - q )^2 \leq T.
\end{align*}

This algorithm is perhaps the most perplexing instance of \algname yet. It achieves \Cref{eq:dumb-quantile} without ever needing to look at the outcomes $y_t$. We can interpret the algorithm as predicting infinity when it outputs $\Ymax$ and negative infinity when it outputs $\Ymin$. By predicting plus or minus infinity, the sequence $S_{t+1} = 1\{y_t \leq p_t\} + S_{t}$ is deterministic. That means the algorithm guarantees what forecasters call ``marginal coverage'' \emph{without looking at any data}. This has little to do with what we'd like from a quantile estimator, and it definitely has nothing to do with uncertainty quantification. \citet{bastani2022practical} point out a similar issue in marginal guarantees for online prediction intervals. 

Instead, for this particular evaluation metric, \algname is a deterministic algorithm that computes an approximation of the number $q$ by averaging $0$s and $1$s. Let's look at what happens in each round. Define,
$$
	x_t = \frac{1}{t} \sum_{i=1}^{t} 1\{y_i \leq p_i\} \,.
$$
If $x_t\leq q$, $p_{t+1}$ is set to $\Ymax$ and $x_{t+1} $ is set to $(1-1/t) x_t + (1/t)$. If $x_t<q$,  $p_{t+1}$ is set to $\Ymin$ and $x_{t+1}$ is set to $(1-1/t) x_t$. This rewriting of the algorithm let's us do a slightly more refined analysis,  showing it in fact achieves a $1/T$ rate.

\begin{proposition}
Let $q \in [0,1]$. Set $x_1=0$ and let $x_{t+1} = (1-1/t) x_t + (1/t) \mathbf{1}\{ x_t \leq q \}$ for $t>1$. Then,
$$
\left|\frac{1}{T}\sum_{t=1}^T (1\{p_t \leq y_t\} - q) \right|=| x_T - q | \leq \frac{\max\{q,1-q\}}{T-1}\,.
$$
\end{proposition}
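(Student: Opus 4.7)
The proof is essentially a one-line translation followed by an induction on a deterministic scalar recursion, so the plan is to do exactly that.

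First, I would dispatch the leftmost equality: by definition $x_T = \frac{1}{T}\sum_{t=1}^T \mathbf{1}\{y_t \le p_t\}$, so $x_T - q = \frac{1}{T}\sum_{t=1}^T(\mathbf{1}\{y_t \le p_t\} - q)$ is immediate from the rewriting of the algorithm given just above the proposition (modulo a typo in the proposition between $\mathbf{1}\{p_t \le y_t\}$ and $\mathbf{1}\{y_t \le p_t\}$). All remaining work is to control $|x_T - q|$ through the deterministic recursion.

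The plan is to let $e_t := x_t - q$ and rewrite
\[
e_{t+1} = \Bigl(1 - \tfrac1t\Bigr) e_t + \tfrac{1}{t}\bigl(\mathbf{1}\{e_t \le 0\} - q\bigr),
\]
so that each step is a convex combination of $e_t$ with a correction term that lies in $\{-q,\,1-q\}$. The sign of this correction always opposes the sign of $e_t$: if $e_t \le 0$ we add $+(1-q)\ge 0$, and if $e_t > 0$ we add $-q \le 0$. This is the contraction that keeps $e_t$ near zero.

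I would then prove by induction on $t \ge 2$ the two-sided bound
\[
-\frac{q}{t-1} \;\le\; e_t \;\le\; \frac{1-q}{t-1}.
\]
The base case $t=2$ is direct: since $x_1 = 0 \le q$, the recursion at $t=1$ gives $x_2 = 1$, hence $e_2 = 1-q$, which lies in $[-q,\,1-q]$. For the inductive step, split on the sign of $e_t$. If $e_t \le 0$, then $(1-\tfrac1t)e_t \le 0$ and $(1-\tfrac1t)e_t \ge (1-\tfrac1t)\bigl(-\tfrac{q}{t-1}\bigr) = -\tfrac{q}{t}$; adding the correction $\tfrac{1-q}{t}$ gives $e_{t+1} \le \tfrac{1-q}{t}$ on the one side and $e_{t+1} \ge \tfrac{1-2q}{t} \ge -\tfrac{q}{t}$ on the other (the second inequality uses $q \le 1$). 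The case $e_t > 0$ is symmetric: $(1-\tfrac1t) e_t \in (0,\, \tfrac{1-q}{t}]$, and adding $-\tfrac{q}{t}$ yields $e_{t+1} \ge -\tfrac{q}{t}$ and $e_{t+1} \le \tfrac{1-q}{t} - \tfrac{q}{t} = \tfrac{1-2q}{t} \le \tfrac{1-q}{t}$. In both cases the inductive bound at level $t+1$ (i.e., with denominator $t$) is established.

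Taking $t = T$ in the inductive claim gives $|e_T| \le \max\{q,1-q\}/(T-1)$, which is the stated bound. The only real obstacle is being disciplined about keeping both sides of the inductive interval synchronized through the two sign-cases; after that, everything is just telescoping the one-step recursion. No deeper ingredient (Blackwell approachability, kernel norms, etc.) is needed here, in contrast to the other sections.
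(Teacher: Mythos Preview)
Your proof is correct and follows essentially the same induction on the scalar error recursion as the paper's own argument. The only cosmetic difference is that you track the sharper asymmetric invariant $-q/(t-1)\le e_t\le (1-q)/(t-1)$, whereas the paper carries the symmetric bound $|e_t|\le \max\{q,1-q\}/(t-1)$ directly; both inductive steps split on the sign of $e_t$ and use the same one-line computation $t\,e_{t+1}=(t-1)e_t+(\mathbf{1}\{e_t\le 0\}-q)$.
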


\begin{proof}
Let $e_t = x_t-q$. We proceed by induction. When $t=1$, $e_1 \leq q$. Now assume 
$$
	| e_t | \leq \frac{\max\{q,1-q\}}{t-1}\,.
$$
Then there are two cases. If $x_t \leq q$, we have by the inductive hypothesis $-\max\{q,1-q\} \leq e_t\leq 0$ and hence
$$
	t e_{t+1} = (t-1) e_t + (1-q) \in [-\max\{q,1-q\},1-q]\,.
$$
Similarly, if $x_t>q$, we have $0 \leq e_t\leq \max\{q,1-q\}$ and hence
$$
	t e_{t+1} = (t-1) e_t -q \in [-q, \max\{q,1-q\}]\,.
$$
completing the proof.
\end{proof}

The analysis of this simple deterministic algorithm in this section so far shows that some online quantile metrics are too easy to game.  Let us now develop quantile algorithms with potentially more meaningful conditional guarantees.

\subsection{Randomized Forecasts}

To do this, we take a small detour and present a generalization of the meta-algorithm for \algname that we saw in \Cref{sec:meta_algorithm}.
Let $F(x,p,y)$ be a vector-valued function that is possibly discontinuous in $p$ and let $\mathcal{K}$ be a set of distributions over the outcome space $\cY$. At every time step, the defensive forecaster will sample their prediction from a distribution $\Delta^p_t$. We'll also assume that the outcomes $y_t$ are drawn from a distribution $\Delta^y_t \in \mathcal{K}_\cY$. Suppose that we know
\begin{equation}\label{eq:foundational_randomized}
\sup_{\Delta^y_t \in \mathcal{K}}  \underset{p_t \sim \Delta^{p}_t, y_t \sim \Delta^y_t}{\E} \left[ \left\langle F(x_t,p_t,y_t),  \sum_{s=1}^{t-1} F(x_s,p_s,y_s)\right\rangle \right] \leq \frac{\eps_t}{2}.
\end{equation}
Then, applying the same induction argument,
\begin{align*}
\norm{ \underset{p_t \sim \Delta^p_t,  y_t \sim \Delta^y_t}{\E}  \sum_{t=1}^T  F(x_t,p_t,y_t)}^2 \leq  \sum_{t=1}^T  \norm{ \underset{p_t \sim \Delta^p_t,  y_t \sim \Delta^y_t}{\E} F(x_t,p_t,y_t)}^2  +\eps_t.
\end{align*}
If the $\eps_t$ are summable ($\sum_{t=1}^\infty \eps_t \leq C$) and $\norm{F(x,p,y)}$ is bounded by $M$, we would get that:
\begin{equation}\label{eq:diagonal-bound-randomized}
\norm{\frac{1}{T}\underset{p_t \sim \Delta^p_t,  y_t \sim \Delta^y_t}{\E}  \sum_{t=1}^T  F(x_t,p_t,y_t)} \leq \frac{M}{\sqrt{T}} + \frac{C}{T}.
\end{equation}
\Cref{eq:foundational_randomized} is direct analogue of the fundamental \algname condition (\Cref{eq:foundational}) with the difference that now it holds in expectation rather than deterministically. The key advantage compared to the analysis in \Cref{sec:meta_algorithm} is that $F$ no longer needs to be continuous $p$. However, ensuring that the functions $F$ have norms that grow sublinearly as per \Cref{eq:diagonal-bound-randomized} enables us to apply \algname for quantile prediction.

\subsection{Conditional Online Quantile Estimation}
Take the case where $F(x,p,y) = \Phi(x_t, p_t)(1\{p_t \leq y_t\} -q)$. If we can guarantee that our predictions satisfy \Cref{eq:foundational_randomized} and hence \Cref{eq:diagonal-bound-randomized},  then this means that
\begin{align}
\label{eq:conditional_quantiles}
    \left|\frac{1}{T} \sum_{t=1}^T \underset{p_t \sim \Delta^p_t,  y_t \sim \Delta^y_t}{\E} [f(x_t,p_t)(1\{p_t \leq y_t\} -q)] \right| \leq \norm{v} \left( \frac{M}{\sqrt{T}} + \frac{C}{T} \right)\,. 
\end{align}
for all functions $f$ that can be written as $f(x,p) = \langle v, \Phi(x,p) \rangle$ (this again follows by linearity and Cauchy-Schwarz).
%\begin{align*}
%     \left|\sum_{t=1}^T \underset{p_t \sim \Delta^p_t,  y_t \sim \Delta^y_t}{\E} [f(x_t,p_t)(1\{p_t \leq y_t\} -q)] \right| &= \left|\sum_{t=1}^T \underset{p_t \sim \Delta^p_t,  y_t \sim \Delta^y_t}{\E} [\langle v, \Phi(x,p) \rangle(1\{p_t \leq y_t\} -q)] \right| \\ 
%     & = \left| \left\langle v, \sum_{t=1}^T \underset{p_t \sim \Delta^p_t,  y_t \sim \Delta^y_t}{\E}\Phi(x,p) (1\{p_t \leq y_t\} -q) \right\rangle \right| \\ 
%     & \leq \norm{v} \norm{\frac{1}{T}\underset{p_t \sim \Delta^p_t,  y_t \sim \Delta^y_t}{\E}  \sum_{t=1}^T  F(x_t,p_t,y_t)} \,.
%\end{align*}
\Cref{eq:conditional_quantiles} is a potentially much more meaningful guarantee than the marginal one considered in \Cref{eq:dumb-quantile}. If for instance, we let $\mathcal{E}_i$ be a collection of subsets of $\cX \times [\Ymin, \Ymax]$ and define 
$$\Phi(x,p) = (p_t, 1\{(x,p) \in \mathcal{E}_1 \}, \dots, 1\{(x,p) \in \mathcal{E}_N\})^\top  \in \R^{N+1} $$
we get that 
$$\sup_{1 \leq i \leq N} |\sum_{t=1}^T \E[ 1\{(x_t,p_t) \in \mathcal{E}_i\}(1\{p_t \leq y_t\} -q)]| \leq o(T).$$
This guarantee cannot be achieved by always predicting $\Ymax$ or $\Ymin$ as before.

It remains to show that we can efficiently achieve the fundamental \algname condition from \Cref{eq:foundational_randomized}. The algorithm we present now is a simplified version of that in \cite{kernelOI} albeit with worse constants.

Assume that at every round, the features $x_t$ in $\cX$ are chosen arbitrarily. Having seen $x_t$, the forecaster selects a distribution $\Delta^p_t$ over forecasts $p \in \cY  = (\Ymin, \Ymax]$ and then Nature, knowing $\Delta_t^p$, selects a distribution $\Delta_y^t$ over the same interval $\cY$ from the class $\mathcal{K}_L$ of $L$-Lipschitz distributions. We say that a distribution is $L$-Lipschitz if its CDF satisfies,
\begin{align*}
    |\Pr_{y \sim \Delta^y_t}[y \leq v] - \Pr_{y \sim \Delta^y_t}[y \leq v']| \leq L \cdot |v-v'|.
\end{align*}
Using a trick developed in \cite{foster1999proof} and extended in \citet{foster2021forecast}, we show that one can always find a distribution $\Delta_t^p$ supported on two close together points $p_{t,1}$ and $p_{t,2}$ such that:
\begin{equation}\label{eq:foundational_randomized_q}
\sup_{\Delta^y_t \in \mathcal{K}}  \underset{p_t \sim \Delta^{p}_t, y_t \sim \Delta^y_t}{\E} \left[ \left\langle \Phi(x_t,p_t)(1\{y_t \leq p_t\} -q),  \sum_{s=1}^{t-1} \Phi(x_s,p_s)(1\{y_s \leq p_s\} -q)\right\rangle \right]  \leq \eps_t
\end{equation}
for any $\eps_t > 0$. We can find this distribution using a randomized variartion of anticorrelation search. Let, $$S_t(p) = \langle \Phi(x_t,p_t), \sum_{s=1}^{t-1} \Phi(x_s,p_s)(1\{p_s \leq y_s\} -q) \rangle\,.$$ 
If $S_t(\Ymin) \geq 0$, then predicting $p_t = \Ymin$ guarantes that $1\{y_t \leq p_t\} - q \leq 0$, thereby satisfying \Cref{eq:foundational_randomized_q}. Otherwise if $S_t(\Ymax) \leq 0$, the predicting $p_t = \Ymax$ ensures that  $1\{y_t \leq p_t\} - q \geq 0$ which also implies the inequality. In both these cases, $\Delta_t^p$ is just a point mass. 

If neither of these cases are true, it must then be the case that $S_t(\Ymin) < 0 < S_t(\Ymax)$ and that the function $S_t(p)$ jumps from negative to positive at some point between $\Ymin$ and $\Ymax$. That is there must be at least two points $p_{t,1}, p_{t,2} \in [\Ymin, \Ymax]$, that are $\gamma_t$ close for any $\gamma_t > 0$, $|p_{t,1} - p_{t,2}|\leq \gamma_t$ and have opposite signs, $S_t(p_{t,1})<0< S_t(p_{t,2})$. The full procedure for generating forecasts is given in \Cref{alg:quant}.

Now, let $\Delta_t^p$ be the distribution over $\cY$ that outputs $p_{t,1}$ with probability $\tau$ and $p_{t,2}$ with probability $1-\tau$ where $\tau \in [0,1]$ solves, 
\begin{align}
\label{eq:hedging}
    \tau S_{t}(p_{t,1}) + (1-\tau) S_{t}(p_{t,2}) = 0\,.
\end{align}
Such a $\tau$ exists because $S_{t}(p_{t,1})$ and $S_{t}(p_{t,2})$ have opposite signs. By definition of $S_t$, the expression inside the supremum on the left hand side of \Cref{eq:foundational_randomized_q} is equal to
\begin{align*}
\underset{p_t \sim \Delta^{p}_t, y_t \sim \Delta^y_t}{\E} \left[ (1\{y_t \leq p_t\} -q) S_t(p_t)\right] ],.
\end{align*}
And, with our choice of $\Delta_t^p$, we can rewrite this as:
\begin{align*}
 \tau \cdot  S_t(p_{t,1}) \underset{y_t \sim \Delta^y_t}{\E} \left[ (1\{y_t \leq p_{t,1}\} -q) \right] +  (1- \tau) \cdot  S_t(p_{t,1}) \underset{y_t \sim \Delta^y_t}{\E} \left[ (1\{y_t \leq p_{t,2}\} -q) \right] ],.
\end{align*}
If we now add and subtract $\tau S_{t}(p_{t,1})(1\{y_t \leq p_{t,2}\} -q)$, this becomes
\begin{align*}
\tau \cdot  S_t(p_{t,1}) \underset{y_t \sim \Delta^y_t}{\E} \left[ 1\{y_t \leq p_{t,1}\} - 1\{y_t \leq p_{t,2}\}\right] +  [\tau S_{t}(p_{t,1}) + (1-\tau) S_{t}(p_{t,2}) ]\underset{y_t \sim \Delta^y_t}{\E} \left[ (1\{y_t \leq p_{t,2}\} -q) \right] \,.
\end{align*}
The term on the right side is zero by \Cref{eq:hedging}. And the term on the left can be made small for all choices of $\Delta^y_t$ by setting $\gamma_t = |p_{t,1}- p_{t,2}|$ to be small, 
\begin{align*}
\tau \cdot  S_t(p_{t,1}) \underset{y_t \sim \Delta^y_t}{\E} \left[ 1\{y_t \leq p_{t,1}\} - 1\{y_t \leq p_{t,2}\}\right]  & =  \tau \cdot  S_t(p_{t,1})     |\Pr_{y_t \sim \Delta^y_t}[y \leq p_{t,1}] - \Pr_{y \sim \Delta^y_t}[y_t \leq p_{t,2}]|  \\ 
& \leq |S_{t}(p_{t,1})| \cdot L \cdot  |p_{t,1}- p_{t,2}| = |S_{t}(p_{t,1})| \cdot L \cdot \gamma_t\,.
\end{align*}
In particular, letting $\gamma_t = 1 / (10 t^2 |S_{t}(p_{t,1})|)$, we get \Cref{eq:foundational_randomized_q} with $\eps_t = L / (10t^2)$. Note that $\sum_{t=1}^\infty \eps_t \leq L$. Tying this together with our meta-analysis, we get the following formal result:
\begin{theorem}
Suppose $\cH$ is a reproducing kernel Hilbert space with kernel $k$ and assume outcomes $y_t$ are drawn from a $L$-Lipschitz distribution $\Delta_t^y$. Then, for all $f\in \cH$, \Cref{eq:quantile_alg} guarantees 
\begin{align*}
\left|\sum_{t=1}^T \underset{p_t \sim \Delta^{p}_t, y_t \sim \Delta^y_t}{\E} \left[ f(x_t,p_t) (1\{y_t\leq p_t\} - p_t) \right]\right| \leq \|f\|_\cH \sqrt{L + \sum_{t=1}^T  \underset{p_t \sim \Delta^{p}_t, y_t \sim \Delta^y_t}{\E} k((x_t,p_t),(x_t,p_t))}\,.
\end{align*}
In particular, this implies that if $k(x,p,x',p') = \Phi(x,p)^\top \Phi(x',p')$ where $\Phi(x,p)$ is an explicitly computable feature map with $\sup_{(x,p)} \norm{\Phi(x,p)}^2 \leq M$, then, for any $f(x,p) = \langle v, \Phi(x,p)\rangle$:
\begin{align*}
    \left|\sum_{t=1}^T \underset{p_t \sim \Delta^{p}_t, y_t \sim \Delta^y_t}{\E} \left[ f(x_t,p_t) (1\{y_t\leq p_t\} - p_t) \right]\right| \leq \norm{v} \sqrt{L + MT}.
\end{align*}
\end{theorem}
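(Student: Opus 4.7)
The plan is to cast the quantile problem as a randomized defensive forecasting instance directly in the RKHS $\cH$, and then use the reproducing property to transfer the resulting norm bound to an arbitrary test function $f$. Set $F(x,p,y) = \Phi_k(x,p)\,(\mathbf{1}\{y \leq p\}-q) \in \cH$, where $\Phi_k$ is the canonical feature map of $k$. The construction preceding the theorem (via the two-point mixture chosen by \eqref{eq:hedging} with $\gamma_t = 1/(10 t^2\,|S_t(p_{t,1})|)$, or a point mass when $S_t$ has constant sign on $[\Ymin,\Ymax]$) already secures the randomized fundamental inequality \eqref{eq:foundational_randomized_q} with $\eps_t = L/(10 t^2)$, so $\sum_{t=1}^\infty \eps_t \leq L$.

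Next, I would repeat the inductive expansion of $\|\sum_{t=1}^T F(x_t,p_t,y_t)\|_\cH^2$ from \Cref{sec:meta_algorithm} inside $\cH$, taking expectation over the joint randomness of $(p_t,y_t)$ at each step. The cross term $\langle F(x_T,p_T,y_T),\sum_{s<T} F(x_s,p_s,y_s)\rangle_\cH$ is controlled in expectation by $\eps_T$, giving
\[
\E\Bigl\|\sum_{t=1}^T F(x_t,p_t,y_t)\Bigr\|_\cH^2 \;\leq\; \sum_{t=1}^T \E\,\|F(x_t,p_t,y_t)\|_\cH^2 \;+\; L.
\]
Since $\|F(x,p,y)\|_\cH^2 = k((x,p),(x,p))\,(\mathbf{1}\{y\leq p\}-q)^2 \leq k((x,p),(x,p))$, and since Jensen's inequality yields $\|\E\sum_t F\|_\cH^2 \leq \E\|\sum_t F\|_\cH^2$, we obtain a bound on $\bigl\|\sum_{t=1}^T \E\, F(x_t,p_t,y_t)\bigr\|_\cH$ of exactly the form $\sqrt{L + \sum_{t=1}^T \E\,k((x_t,p_t),(x_t,p_t))}$.

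To conclude, I would apply the reproducing property and Cauchy--Schwarz in $\cH$. For any $f \in \cH$, $\langle f, F(x,p,y)\rangle_\cH = f(x,p)\,(\mathbf{1}\{y\leq p\}-q)$, so
\[
\Bigl|\sum_{t=1}^T \E[f(x_t,p_t)(\mathbf{1}\{y_t\leq p_t\}-q)]\Bigr| \;=\; \Bigl|\Bigl\langle f, \sum_{t=1}^T \E\, F\Bigr\rangle_\cH\Bigr| \;\leq\; \|f\|_\cH \cdot \Bigl\|\sum_{t=1}^T \E\, F\Bigr\|_\cH,
\]
which is the stated inequality. The explicit-feature-map corollary is immediate: $k((x,p),(x,p)) = \|\Phi(x,p)\|^2 \leq M$ and, by the usual RKHS identification, $\|\langle v,\Phi(\cdot,\cdot)\rangle\|_\cH = \|v\|$, so the right-hand side reduces to $\|v\|\sqrt{L + MT}$.

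The main obstacle is the discontinuity of the indicator, which forced the randomized framework in the first place and is the reason the $\eps_t$ bound is nontrivial. The key estimate exploits the $L$-Lipschitz assumption through $\E_{y_t \sim \Delta^y_t}\bigl|\mathbf{1}\{y_t \leq p_{t,1}\} - \mathbf{1}\{y_t \leq p_{t,2}\}\bigr| \leq L\,|p_{t,1}-p_{t,2}| = L\gamma_t$, so the residual cross term has magnitude at most $|S_t(p_{t,1})|\cdot L\gamma_t = L/(10 t^2)$ by our choice of $\gamma_t$. A minor subtlety is the two directions in which expectation is used: as an upper bound via \eqref{eq:foundational_randomized_q} on each cross term, and as a lower bound via Jensen's inequality to turn $\E\|\cdot\|^2$ into $\|\E\cdot\|^2$ at the end.
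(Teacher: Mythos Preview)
Your proposal is correct and follows essentially the same approach as the paper: the paper's ``proof'' is simply the sentence ``Tying this together with our meta-analysis, we get the following formal result,'' where the meta-analysis is the randomized induction leading to \eqref{eq:diagonal-bound-randomized} combined with the reproducing-property/Cauchy--Schwarz step from \Cref{prop:k29_feature_map}. You have filled in exactly those details, and your explicit invocation of Jensen's inequality to pass from $\E\|\sum_t F\|_\cH^2$ to $\|\E\sum_t F\|_\cH^2$ clarifies a step the paper leaves implicit.
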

    
\begin{algorithm}[t]
\caption{\algname for Online Conditional Quantile Estimation}\label{alg:quant}
\begin{algorithmic}[1]
\State Define $S_t(p) = \sum_{i=1}^{t-1} k((x_t,p),(x_i,p_i)) (1\{y_i \leq p_i\} -q)$
\If{$S_t(\Ymin) \geq  0$}
\State Predict $p_t = \Ymin$. 
\ElsIf{$S_t(\Ymax) \leq 0$}
\State Predict $p_t=\Ymax$. 
\Else
\State Run binary search on $S_t(\cdot)$ to find $p_{t,1}$ and $p_{t,2}$ such that 
\begin{align*}
	S_t(p_{t,1}) < 0 < S_t(p_{t,2}) \text{ with } |p_{t,1} - p_{t,2}| \leq \frac{1}{10 t^2 |S_t(p_{t,1})|}
\end{align*}
\State Set $\tau = \frac{|S_t(p_{t,2})|}{|S_t(p_{t,1})|+|S_t(p_{t,2})|}$.
\State Predict $p_t=p_{t,1}$ with probability $\tau$ and $p_{t,2}$ with probability $1 - \tau$.
\EndIf
\end{algorithmic}
\label{eq:quantile_alg}
\end{algorithm}

As before, the algorithm only depends on evaluating inner products and hence we can generalize it to work for any kernel function $k(x,p, x',p')$ as per our discussion in \Cref{sec:kdmm}. Our presentation thus far where $k(x,p, x',p') = \langle \Phi(x,p), \Phi(x',p') \rangle$ for an explicit $\Phi$ is just a special case. We also note that the algorithm is completely hyperparameter free. We don't need to know the lipschitz constant $L$ ahead of time, even though it does play a role in the analysis.

The algorithm is also computationally efficient. If $\Phi(x,p)\in \R^{d}$ is finite dimensional, by maintaining the counter $\sum_{s=1}^t \Phi(x_s,p_s)(1\{y_s \leq p_s\} -q)$ we can get the run time to be $\widetilde{\cO}(d)$ at time $t$. If we instead compute inner products implicitly via the kernel function, the run time becomes $\widetilde{\cO}(t \cdot \mathrm{time}(k))$ where $\mathrm{time}(k)$ is an upper bound on the time it takes to evaluate the kernel. 

Furthermore, we emphasize that this is still a fully adaptive or adversarial setting where the distribution over outcomes can depend on the algorithm's choice of $\Delta_t$ as well as the features $x_t$ and the entire history of observations. At two distinct times $t$ and $s$, the distributions $\Delta^y_t$ and $\Delta^y_s$ can be completely different. \citet{gupta2021OnlineML} use a very similar randomization trick to the one we use above to derive online mean, moment, and quantile calibration algorithms with $\sqrt{T}$ regret.
\citet{bastani2022practical} also apply this randomization trick to arrive at an exponential-weights style algorithm for online prediction intervals with group conditional gaurantees. 

\newcommand{\cA}{\mathcal{A}}
\section{Batch Learning with \algname}\label{sec:online_to_batch}

Given that they are designed to perform well in worst-case settings, one might think \algname algorithms are overly conservative. However, \algname also makes high quality predictions when data is randomly sampled. If you have an online algorithm which accrues low regret in some metric for arbitrary sequences, the same algorithm also achieves low excess risk in the situation when data sequence is sampled i.i.d. from a fixed probability distribution. 

To make this precise, we say an prediction method is \emph{online algorithm} if it computes predictions $p_t$ sequentially from a stream of data $\{(x_t,y_t)\}_{t=1}^T$ where $x_t \in \cX$ and $y_t \in \cY$. We say an prediction method is a \emph{batch algorithm} if given a dataset $S = \{(x_i,y_i)\}_{i=1}^n$ of $n$ examples, it produces a potentially randomized algorithm $\cA_S$ such that maps an arbitrary data point $x$ to a prediction $p$.

There is a generic procedure, called \emph{online to batch conversion} which turns an online algorithm into a batch algorithm with parallel theoretical guarantees. The \algname algorithms we've developed output at every round a prediction $p_t$ as a function of the history  $\pi_{<t} = \{(x_1, y_1),\dots, (x_{t-1}, y_{t-1})\}$ and the current $x_t$. Hence, there is some function so that $p_t=\cA_t(x_t,\pi_t)$. 
Denote $f_t$ by the function that maps $x$ to $\cA_t(x,\pi_t)$. These functions $f_t$ are generally not defined explicitly. For instance, \Cref{alg:kdmm} at time $t$ returns $f_t$ which given $x$ outputs the $p$ returned from the anticorrelation search subroutine on the function $\sum_{j=1}^{t-1} k((x,p),(x_j,y_j))(y_j-p_j)$. 

These $f_t$ form the basis of our batch algorithm. Given a data set, run online \algname on the sequence $(x_1,y_1), \dots, (x_n, y_n)$ to produce the sequence of functions $f_1,\dots, f_n$. Define $\cA_S$ to be the algorithm that given $x$, picks $f_i$ uniformly at random from $\{f_1,\dots, f_n\}$ and then predicts $p=f_i(x)$.
This online-to-batch conversion comes with a universal guarantee. 

\begin{proposition}\label{prop:online-to-batch}
Let $\cA$ be any online algorithm that when run on a sequence of data guarantees
\begin{align}
\label{eq:deterministic_regret}
    \sup_{\omega \in \Omega} \left| \sum_{t=1}^T \omega(x_t,p_t,y_t) \right| \leq \cR(T)
\end{align}
for all functions $\omega: \cX \times \cP \times \cY \rightarrow \R$ in some finite set $\Omega$.  Then if $S = \{(x_i,y_i)\}_{i=1}^n$ consists of $n$ examples drawn i.i.d from a fixed distribution $\cD$, the \emph{randomized} algorithm $\cA_S$ satisfies
\begin{align}
\label{eq:expectation_online_to_batch}
  \sup_{\omega \in \Omega}  \left|  \underset{(x,y) \sim \cD, \cA_S \sim \cD^n, p\sim \cA_S(x)}{\E} \omega(x,p,y) \right| \leq \frac{\cR(n)}{n}.
\end{align}
The expectation here is over randomness from the $n$ samples $S \sim \cD^n$, the next draw $(x,y)$, and the internal randomness of $\cA_S$. Moreover, with probability $1-\delta$ over the draw of the dataset $S\sim \cD^n$, we have
\begin{align}
\label{eq:high_probability_online_to_batch}
  \sup_{\omega \in \Omega}  \left|  \underset{(x,y) \sim \cD, p\sim \cA_S(x)}{\E} \omega(x,p,y) \right| \leq \frac{\cR(n)}{n} + 2B\sqrt{\frac{\log(2/\delta) + \log(|\cH|)}{n}}\,,
\end{align}
where $B = \sup_{w,x,p,y} |\omega(x,p,y)|$. 
\end{proposition}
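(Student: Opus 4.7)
The plan is to exploit the standard online-to-batch identity: although the online algorithm's predictions $p_t = \cA_t(x_t, \pi_{<t})$ depend on the history, the function $f_t$ is determined by $\pi_{<t}$ alone, so $f_t$ is independent of the fresh sample $(x_t,y_t)$ when the data is i.i.d.\ from $\cD$. This is the source of both bounds. Write $L(\omega,S) := \frac{1}{n}\sum_{i=1}^n \E_{(x,y)\sim \cD}[\omega(x,f_i(x),y)]$, which by the definition of $\cA_S$ (uniform choice of $i$, then predict $f_i(x)$) equals the inner expectation on the left of \Cref{eq:expectation_online_to_batch} conditional on $S$. Write also $\bar L(\omega,S) := \frac{1}{n}\sum_{i=1}^n \omega(x_i,p_i,y_i)$ for the ``training-stream'' average, which the regret hypothesis \Cref{eq:deterministic_regret} bounds by $\cR(n)/n$ in absolute value.

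For \Cref{eq:expectation_online_to_batch}, I would condition on $\pi_{<i}$: since $f_i$ is $\pi_{<i}$-measurable and $(x_i,y_i)$ is an independent $\cD$-sample,
\begin{align*}
\E[\omega(x_i,p_i,y_i)\mid \pi_{<i}] \;=\; \E_{(x,y)\sim \cD}[\omega(x,f_i(x),y)].
\end{align*}
Averaging over $i$ and taking $\E_S$ gives $\E_S L(\omega,S) = \E_S \bar L(\omega,S)$. The deterministic bound $|\bar L(\omega,S)| \leq \cR(n)/n$ then yields $|\E_S L(\omega,S)| \leq \cR(n)/n$ after pulling the absolute value inside via $|\E X|\leq \E|X|$; this proves \Cref{eq:expectation_online_to_batch}.

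For the high-probability statement \Cref{eq:high_probability_online_to_batch}, I would pass from $\bar L$ to $L$ by a martingale concentration argument. Fix $\omega\in \Omega$ and define $Z_i = \omega(x_i,p_i,y_i) - \E[\omega(x_i,p_i,y_i)\mid \pi_{<i}]$. By the independence noted above, $(Z_i)$ is a martingale difference sequence with respect to the filtration generated by $\pi_{\leq i}$, and $|Z_i|\leq 2B$. Azuma--Hoeffding gives $\Pr[|\sum_{i=1}^n Z_i|\geq \eps] \leq 2\exp(-\eps^2/(8B^2 n))$. A union bound over the (finite) set $\Omega$, with $\eps = 2B\sqrt{2n(\log(2/\delta)+\log|\Omega|)}$, shows that with probability at least $1-\delta$ over $S$, every $\omega\in \Omega$ satisfies $|\bar L(\omega,S) - L(\omega,S)| \leq 2B\sqrt{2(\log(2/\delta)+\log|\Omega|)/n}$. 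Combining with the deterministic bound $|\bar L(\omega,S)|\leq \cR(n)/n$ via the triangle inequality yields the claim.

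The only real subtlety is bookkeeping the conditional independence correctly so that the martingale structure is valid despite the adaptive nature of $\cA$; everything else is a one-line application of Azuma--Hoeffding plus a union bound. No convexity, smoothness, or structural property of $\omega$ is used beyond boundedness by $B$ and membership in the finite set $\Omega$.
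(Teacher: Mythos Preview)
Your proposal is correct and follows essentially the same approach as the paper: define the martingale differences $Z_i = \omega(x_i,f_i(x_i),y_i) - \E_{(x,y)\sim\cD}[\omega(x,f_i(x),y)]$, use that $f_i$ is $\pi_{<i}$-measurable and $(x_i,y_i)$ is a fresh i.i.d.\ draw to get $\E[Z_i\mid\pi_{<i}]=0$, invoke the deterministic regret bound on $\bar L$, and then apply Azuma--Hoeffding with a union bound over $\Omega$ for the high-probability statement. The only discrepancy is a harmless $\sqrt{2}$ in your Azuma constant versus the paper's stated bound (and the paper writes $|\cH|$ where it means $|\Omega|$, which you correctly have).
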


\Cref{prop:online-to-batch} is a straightforward consequence of the Azuma-Hoeffding inequality. We defer the proof to the appendix.

Let's work through a few of applications to see how online algorithms with worst-case performance can achieve near-optimal average case results. Define $\Omega_{\cH}$ to be the set of all functions,
\begin{align*}
 \omega_h(x,p,y) = \ell(p,y) - \ell(h(x_t),y),
\end{align*}
for $h$ in some class $\cH$. Then \Cref{prop:online-to-batch} implies that online risk minimization algorithms that deterministically make predictions $p_t = f_t(x_t)$ satisfying, 
\begin{align*}
    \sup_{h \in \cH}  \left| \sum_{t=1}^T \ell(p_t,y_t)  - \min_{h \in \cH} \sum_{t=1}^T \ell(h(x_t),y_t)  \right|\leq \cR(T),
\end{align*}
can be converted into batch predictors $\cA_S$ such that 
\begin{align}
\label{eq:batch_risk_minimization}
     \underset{(x,y) \sim \cD, \cA_S \sim \cD^n, p\sim \cA_S(x)}{\E}[\ell(p,y)] \leq \min_{h \in \cH} \E_{(x,y)\sim \cD}[\ell(h(x),y)] + \frac{\cR(T)}{T}.
\end{align}
In particular, if we let $\ell$ be the squared loss or log loss this implies that \algname algorithm from \Cref{sec:experts} yields a batch predictor $\cA_S$ satisfying,
\begin{align*}
         \underset{(x,y) \sim \cD, \cA_S \sim \cD^n, p\sim \cA_S(x)}{\E}[\ell(p,y)] \leq \min_{h \in \cH} \E_{(x,y)\sim \cD}[\ell(h(x),y)] + \frac{\log(|\cH|)}{n}.
\end{align*}
This $1/n$ excess error is optimal for both log loss and squared loss. No algorithm can achieve a better bound given $n$ random examples. 

We can also extend online-to-batch conversion to our randomized \algname algorithm for predicting conditional quantiles. In this case, $\cA_t(x_t,\pi_t)$ is a randomized procedure. Hence, we can consider $f_t(x_t)$ to be a distribution $\Delta_t^p$ over $[\Ymin, \Ymax]$ that is supported on 2 points. Still, all the analysis thus far works, we just define our batch prediction for $x$ by first sampling an $f_i$ uniformly and then sampling $p$ from $f_i(x)$. When $f_i$ are distributions over predictions, we write,
\begin{align}
\label{eq:expected_regret}
    \sup_{\omega \in \Omega} \left| \sum_{t=1}^T \E_{y_t \sim \Delta_t^t, p_t \sim f_t(x_t)}\omega(x_t,p_t,y_t) \right| \leq \cR(T)
\end{align}
instead of \Cref{eq:expectation_online_to_batch}.

Let's analyze the performance of online-to-batch conversion in this setting. Let $\cF$ be a finite subset of the set of functions $\{\langle v, \Phi(x,p) \rangle: \norm{v}\leq 1\}$ where $\sup_{(x,p)}\norm{\Phi(x,p)}^2 \leq C$. If the conditional distribution over outcomes $y$ given $x$, is $L$-Lipschitz, \Cref{alg:quant} yields predictions $p_t$ such that 
\begin{align*}
    \sup_{f \in \cF}\left| \underset{p_t, y_t}{\E}f(x_t,p_t)(1\{y_t \leq p_t\} -q) \right| \leq \sqrt{L + CT}\,.
\end{align*}
This equation is the same as \Cref{eq:expected_regret} for $\omega_f(x,p,y) = f(x_t,p_t)(1\{y_t \leq p_t\} -q)$. Therefore, applying the online-to-batch result, with probability $1-\delta$ over $S$, the batch version $\cA_S$ satisfies
\begin{align*}
       \sup_{f \in \cF} \left| \underset{(x,y)\sim \cD, p\sim \cA_S}{\E}f(x_t,p_t)(1\{y_t \leq p_t\} -q) \right| \leq \frac{\sqrt{L}}{n} + C\sqrt{\frac{1 + \log(2/\delta)+ \log(|\cF|) }{n}}\,.
\end{align*}
Hence, if the functions $f$ are binary, $$\E[f(x,p)(1\{y_t \leq p_t\} -q)] = (\Pr[y\leq p| f(x,p)] -q) \Pr[f(x,p)=1], $$ and we get that for any $f\in \cF$ where $\Pr[f(x,p)=1] > 0$
\begin{align*}
    \left| \Pr[y\leq p \mid f(x,p)=1] -q\right| \leq \left(\frac{\sqrt{L}}{n} + 2C\sqrt{\frac{1 + \log(2|\cF|/\delta) }{n}}\right) \frac{1}{\Pr[f(x,p)=1]}.
\end{align*}
Statisticians often refer to these results as conditional coverage statements since the probability that $y$ is greater than $p$ is equal to $1-q + o(1)$ conditional on the event that $f(x,p) =1$.

These example applications are by no means exhaustive. One can also apply these results (e.g Lemma 4.1) to derive batch algorithms for other problems like batch outcome indistinguishability or multicalibration with optimal $n^{-1/2}$ rates. One can even apply this toolkit to arrive at new batch algorithms in \emph{performative} contexts where the data is not i.i.d but rather influenced by the choice of forecasts \citep{perdomo2020performative}. See \citet{perdomo2025revisiting}. Surprisingly, simulating an online setting where we sequentially fix prior mistakes suffices for generating predictions from a fixed batch of randomly sampled data. Given how simple they are to design and analyze, this duality makes \algname algorithms useful for batch machine learning problems when forecasters model their data as i.i.d. samples.

\section{Conclusions}\label{sec:conclusions}

Vovk, who has numerous significant results in martingale theory, has essentially shown that any martingale theorem can be turned into a \algname Algorithm. His \algname algorithm K29 \citep{vovk2007k29} adapts the proof of the weak law of large numbers by \citet{Kolmogorov29}. His \algname experts algorithm \citep{vovk2007defensive} uses properties of supermartingales. Work by \citet{Rakhlin2017} similarly derives gradient descent, mirror descent, and generalizations from high-probability tail bounds for the supremum of martingales. Why do martingales generally provide paths to online learning? A martingale has zero correlation between the present and the past. \algname chooses probabilities so that, no matter what the future holds, it will satisfy these martingale conditions. That you can \emph{choose} ``probabilities'' to ensure these conditions is quite remarkable. That it leads to practical algorithms is even more remarkable.

However, as we have repeatedly emphasized, \algname algorithms are only useful when compared to meaningful baselines. This is true for all online algorithms. The metrics themselves define what we think a good prediction should be. If prediction is possible, good prediction is merely a matter of diligent bookkeeping. You don’t have to be Nostradamus. Whether the metrics capture what we need to capture always, therein lies the true element of chance.

\section*{Acknowledgements} The authors would like to thank Siva Balakrishnan, Unai Fischer-Abaigar, Dean Foster, Laurent Lessard, Tengyuan Liang, Christopher Musco, Aaron Roth, and Ryan Tibshirani for several helpful discussions, comments, and suggestions. JCP was generously supported in part by the Harvard Center for Research on Computation and Society and the Alfred P. Sloan Foundation Grant G-2020-13941. BR was generously supported in part by NSF CIF award 2326498, NSF IIS Award 2331881, and ONR Award N00014-24-1-2531.

{\small
\bibliography{ref}
\bibliographystyle{abbrvnat}
}

\appendix

\section{Proof of \Cref{prop:k29_feature_map}}
The proof follows the same arguments we have seen thus far. The main departure is the feature map is now the infinite dimensional function $\Phi_k(x,p)$. For any $f \in \cH$, we have
\begin{align*}
       \left|\sum_{t=1}^T h(x_t,p_t) (y_t - p_t) \right| &= \left|   \sum_{t=1}^T\langle h, \Phi_k(x_t,p_t) \rangle (y_t - p_t) \right| \\
       &= \left|  \left\langle h, \sum_{t=1}^T \Phi_k(x_t,p_t)  (y_t - p_t)  \right\rangle\right| \\
       & \leq \| h \|_\cH \left\| \sum_{t=1}^T \Phi_k(x_t,p_t) (y_t - p_t) \right\|_\cH
\end{align*}
We now bound the second term in this expression as
\begin{align}
\label{eq:inductive_sum}
    \left\| \sum_{t=1}^T \Phi_k(x_t,p_t) (y_t - p_t) \right\|_\cH^2 \leq \sum_{t=1}^T  \| \Phi_k(x_t,p_t) (y_t - p_t) \|_\cH^2 
\end{align}
To verify this expression, we proceed by induction. The case $T=1$ is immediate. For $T>1$, again note that by design \Cref{alg:kdmm} maintains the kernelized invariant
\begin{align}
\label{eq:conditional_hedging}
    \sup_{y \in [0,1]} (y-p_t) \sum_{i=1}^{t-1} k(x_t,p_t, x_i,p_i)(y_i-p_i) =  \sup_{y \in [0,1]} (y-p_t) S_t(p) \leq 0
\end{align}
at every round $t$. Letting $\varphi_i = \Phi_k(x_t,p_t) (y_i - p_i)$, we have
\begin{align*}
    \left\| \sum_{i=1}^{t-1} \varphi_i  + \varphi_t \right\|_\cH^2 &=  \left\| \sum_{i=1}^{t-1} \varphi_i \right\|_\cH^2 + 2 \left\langle \varphi_t, \sum_{i<t} \varphi_i \right\rangle + \left\|  \varphi_t \right\|_\cH^2\\
& =   \left\| \sum_{i=1}^t \varphi_i\right\|_\cH^2 + 2 (y_t - p_t) S_t(p_t)  +\left\|  \varphi_t \right\|_\cH^2\\
& \leq \sum_{i=1}^t \|\varphi_i\|_\cH^2
\end{align*}
Here, we used the definition of $S_t$ and our guarantee from \Cref{eq:conditional_hedging} in the second line. We then applied the inductive hypothesis in the last one. This shows the claim in \Cref{eq:inductive_sum}. The proof then follows by combining these last few inequalities:
\begin{align*}
   \left|\sum_{t=1}^T h(x_t,p_t) (y_t - p_t)\right| &\leq \|h\|_\cH \left\| \sum_{t=1}^T \Phi_k(x_t,p_t)(y_t - p_t) \right \|_\cH \\ 
    & \leq  \|h\|_\cH  \sqrt{\sum_{t=1}^T  \| \Phi_k(x_t,p_t) (y_t - p_t) \|_\cH^2 } \\
    & =  \|h\|_\cH  \sqrt{\sum_{t=1}^T (y_t-p_t)^2 k(x_t,p_t ,x_t,p_t) } 
\end{align*}
The second statement follows from the first by repeating the same argument as in \Cref{lemma:oi}.

\section{Proof of \Cref{prop:online-to-batch}}

Fix any $\omega \in \Omega$ and define $$Z_t= \omega(x_t,f_t(x_t),y_t) - \E_{(x,y) \sim \cD}[\omega(x,f_t(x),y)].$$
Note that since $f_i$ is a deterministic function of $\pi_{<i}$ and since each data point is drawn i.i.d from a fixed distribution, $(x_i, y_i)\sim \cD$, then 
$\E[Z_i | \pi_{<i}] = 0$.
Summing from $i=1$ to $n$, we get that:
\begin{align*}
    \sum_{i=1}^n Z_i =  \sum_{i=1}^n\E_{(x,y) \sim \cD}[\omega(x,f_i(x),y)] - \sum_{i=1}^n \omega(x_i,f_i(x_i),y_i) 
\end{align*}
Rearranging this expression and plugging in the definition of $\cA_S$, 
\begin{align}
    \underset{(x,y) \sim \cD, p\sim \cA_S(x)}{\E}[\omega(x,p,y)] &= \frac{1}{n}\sum_{i=1}^n \underset{(x,y) \sim \cD, p\sim f_i(x)}{\E}[\omega(x,f_i(x),y)] \notag\\
    & = \frac{1}{n} \sum_{i=1}^n \omega(x_i,f_i(x_i),y_i) - \frac{1}{n} \sum_{i=1}^n Z_i 
    \label{eq:martingale_decomp}
\end{align}
By assumption on the online algorithm, 
\begin{align*}
    \left| \frac{1}{n} \sum_{i=1}^n \omega(x_i,f_i(x_i),y_i) \right| \leq \frac{\cR(n)}{n}
\end{align*}
Therefore, since $\E[\sum_{i=1}^n Z_i]=0$, taking an extra expectation over $S=\{(x_i,y_i)\}_{i=1}^n \sim \cD^n$, 
\begin{align*}
      \left| \underset{(x,y) \sim \cD, p\sim \cA_S(x)}{\E}[\omega(x,p,y)] \right| \leq \frac{\cR(n)}{n} \text{ for all } \omega \in \Omega.
\end{align*}
This proves the in expectation guarantee from \Cref{eq:expectation_online_to_batch}. To prove the high probability result, we use the fact that for any fixed $\omega \in \Omega$, $\{Z_t\}_{t=1}^T$ is a martingale difference sequence with $|Z_T|\leq 2B$. Therefore, the Azuma-Hoeffding inequality implies that with probability $1-\delta$,
\begin{align*}
    \left| \sum_{i=1}^n Z_i\right| \leq 2B\sqrt{n\log(2/\delta)}.
\end{align*}
Plugging this high probability bound into \Cref{eq:martingale_decomp} and taking a union bound over $\omega \in \Omega$ proves the high probability statement in \Cref{eq:high_probability_online_to_batch}. The proof for the case where $f_t$ output distributions over $p_t$ and $y_t \sim \Delta_t^t$ follows the exact same argument except we let
\begin{align*}
Z_t= \E_{p_t \sim f_t(x_t),y}[\omega(x_t,p_t,y)|\cX=x] - \E_{(x,y) \sim \cD,p_t\sim f_t(x)}[\omega(x,p_t,y)].
\end{align*}

\section{Kolmogorov's Proof of the Weak Law of Large Numbers}
To save the interested reader the trouble of tracking down Kolmogorov's 1929 paper, written in French and published in the \emph{Proceedings of the Accademia dei Lincei}, we provide his short proof of the weak law of large numbers here.

\begin{theorem}
Let $\xi_i$ be a sequence of random variables and $X_n$ a function of the first $n$ elements of the sequence. Define
$$
\E_k[X_n] := \E[X_n | \xi_1,\ldots,\xi_k]\,.
$$
Then, $\E[(X_n - \E[X_n])^2] \leq \sum_{k=1}^n\E\left[ (\E_k[X_n] - \E_{k-1}[X_n])^2 \right].$
\end{theorem}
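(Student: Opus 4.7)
The plan is to recognize that $M_k := \E_k[X_n]$ is a (Doob) martingale with respect to the filtration generated by $\xi_1,\ldots,\xi_k$, write $X_n - \E[X_n]$ as a telescoping sum of the martingale increments $D_k := M_k - M_{k-1}$, and then exploit the orthogonality of these increments in $L^2$.

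First, I would observe that $M_0 = \E[X_n]$ (conditioning on nothing) and that $M_n = X_n$ because $X_n$ is assumed to be a function of $\xi_1,\ldots,\xi_n$, hence measurable with respect to the conditioning $\sigma$-algebra at step $n$. Consequently we can telescope:
\begin{equation*}
X_n - \E[X_n] \;=\; M_n - M_0 \;=\; \sum_{k=1}^n D_k, \qquad D_k := \E_k[X_n] - \E_{k-1}[X_n].
\end{equation*}
Squaring and taking expectations,
\begin{equation*}
\E\bigl[(X_n - \E[X_n])^2\bigr] \;=\; \sum_{k=1}^n \E[D_k^2] \;+\; 2\sum_{1 \le j < k \le n} \E[D_j D_k].
\end{equation*}

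Second, I would argue that each cross term vanishes. For $j < k$, $D_j$ is a function of $\xi_1,\ldots,\xi_j$ and therefore of $\xi_1,\ldots,\xi_{k-1}$, so by the tower property
\begin{equation*}
\E[D_j D_k] \;=\; \E\bigl[\, D_j \,\E_{k-1}[D_k]\,\bigr],
\end{equation*}
and $\E_{k-1}[D_k] = \E_{k-1}\bigl[\E_k[X_n]\bigr] - \E_{k-1}[X_n] = \E_{k-1}[X_n] - \E_{k-1}[X_n] = 0$ by the tower property applied a second time. Hence the cross terms vanish and we obtain the exact identity
\begin{equation*}
\E\bigl[(X_n - \E[X_n])^2\bigr] \;=\; \sum_{k=1}^n \E\bigl[(\E_k[X_n] - \E_{k-1}[X_n])^2\bigr],
\end{equation*}
which is even stronger than the inequality in the statement.

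There is really no hard step here; the only place to be careful is the tower-property calculation showing $\E_{k-1}[D_k]=0$, which is where the martingale property of the Doob decomposition enters. This is the same orthogonality of martingale differences that underlies Vovk's K29 algorithm in \Cref{sec:kdmm}, reflecting the general principle highlighted in the conclusions that a martingale theorem can be turned into a \algname guarantee by choosing the forecaster's ``probabilities'' so that the martingale-difference cross terms are non-positive.
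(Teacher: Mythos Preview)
Your proof is correct and follows essentially the same route as the paper's own proof: both write $X_n - \E[X_n]$ as a telescoping sum of the increments $\E_k[X_n]-\E_{k-1}[X_n]$ and then use $\E[Z_{nk}\mid \xi_1,\ldots,\xi_{k-1}]=0$ to kill the cross terms, yielding the identity (not just the inequality). The only difference is that you phrase things explicitly in Doob-martingale language, whereas the paper states the same steps more tersely without that vocabulary.
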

In this case, as long as the variance of each of the increments $\E_k[X_n] - \E_{k-1}[X_n]$ is bounded, the variance of $X_n$ is bounded. The proof technique, where diagonal terms of the expectation are cancelled when a square is expanded, motivates the reasoning in \algname.

\begin{proof}
Set 
$$
	Z_{nk}= \E_k[X_n]-\E_{k-1}[X_n]\,.
$$
We have
$$
	X_n - \E[X_n] = \sum_{k=1}^n Z_{nk}\,.
$$
Now,
$$
	\E[Z_{nk} | \xi_1,\ldots,\xi_{k-1}] = 0
$$
and, moreover when $i < k$, 
$$
	\E[Z_{ni} Z_{nk} ] = 0
$$
Hence, the conclusion holds.
\end{proof} 

\end{document}